\def\featspace{{\mathcal{X}}}
\def\labelspace{{\mathcal{Y}}}
\def\hypospace{{\mathcal{H}}}
\def\eqref#1{equation~\ref{#1}}
\def\1{\bm{1}}
\def\vx{{\bm{x}}}
\DeclareMathAlphabet{\mathsfit}{\encodingdefault}{\sfdefault}{m}{sl}
\SetMathAlphabet{\mathsfit}{bold}{\encodingdefault}{\sfdefault}{bx}{n}
\DeclareMathOperator*{\argmin}{arg\,min}
\definecolor{color1}{HTML}{b26062}
\definecolor{color2}{HTML}{c88a67}
\definecolor{color3}{HTML}{619c6f}
\DeclareRobustCommand{\legendsquare}[1]{%
  \textcolor{#1}{\rule{2ex}{1.5ex}}%
}
\author{Shlok Mehendale}
\affiliation{
  \institution{CSIS, BITS Pilani KK Birla Goa Campus}
  \country{India}
}
\author{Aditya Challa}
\affiliation{
  \institution{CSIS, BITS Pilani KK Birla Goa Campus}
  \country{India}
}
\author{Rahul Yedida}
\affiliation{
  \institution{Lexis Nexis}
  \city{North Carolina}
  \country{USA}
}
\author{Sravan Danda}
\affiliation{
  \institution{CSIS, BITS Pilani KK Birla Goa Campus}
  \country{India}
}
\author{Santonu Sarkar}
\affiliation{
  \institution{CSIS, BITS Pilani KK Birla Goa Campus}
  \country{India}
}
\author{Snehanshu Saha}
\affiliation{
  \institution{CSIS, BITS Pilani KK Birla Goa Campus}
  \country{India}
}
\def\rnthm{{Radon--Nikod\'{y}m}}
\begin{document}

\title{A \rnthm{} Perspective on Anomaly Detection: Theory and Implications}

\begin{abstract}

Which principle underpins the design of an effective anomaly detection loss function? The answer lies in the concept of \rnthm{} theorem, a fundamental concept in measure theory. The key insight from this article is -- Multiplying the vanilla loss function with the \rnthm{} derivative improves the performance across the board. We refer to this as RN-Loss. We prove this using the setting of PAC (Probably Approximately Correct) learnability. 

Depending on the context a \rnthm{} derivative takes different forms. In the simplest case of supervised anomaly detection, \rnthm{} derivative takes the form of a simple weighted loss. In the case of unsupervised anomaly detection (with distributional assumptions), \rnthm{} derivative takes the form of the popular cluster based local outlier factor.

We evaluate our algorithm on 96 datasets, including univariate and multivariate data from diverse domains, including healthcare, cybersecurity, and finance. We show that RN-Derivative algorithms  outperform state-of-the-art methods on 68\% of Multivariate datasets (based on F-1 scores) and also achieves peak F1-scores on 72\% of time series (Univariate) datasets. 

\end{abstract}

\maketitle

\section{Introduction}
\label{sec:intro}

Anomaly detection is the process of identifying rare yet significant deviations from normal patterns. This has become essential in various domains such as finance, healthcare, and cyber-security, where undetected anomalies can lead to catastrophic consequences. Moreover, when detected, anomalies can provide significant value. Despite its practical importance, the diversity of real-world settings hinders a unified theoretical treatment. Few aspects which affect the good theoretical framework are
\begin{enumerate}
    \item[(a)] Supervised vs Unsupervised Anomaly Detection: One may or may not have access to sample anomalies in practical settings. If a sample of \emph{labeled anomalies} is provided, it is referred to as supervised anomaly detection. Else it is referred to as unsupervised anomaly detection. \textbf{Remark:} An important subtlety here is that -- One might have anomalies in the sample but they are not labeled. This is also conventionally categorized as unsupervised anomaly detection. In some settings, a small number of labeled anomalies (or partially labeled data) are available — this is often referred to as semi-supervised or weakly supervised anomaly detection (\cite{PReNet},\cite{DeepSAD})
    \item[(b)] Percentage of Anomalies in the train set: It is intuitively clear that if the percentage of anomalies in the train set is higher, it makes anomaly detection simpler. However, different algorithms are usually preferred at different thresholds of anomalies. 
    \item[(c)] Dimensionality of the dataset:  As with most of machine learning algorithms, dimension plays an important role in the ability to find the right hypothesis function. Simple univariate anomaly detection can be performed using classical statistical measures such as standard deviations and/or quantiles. Multivariate anomaly detection is significantly harder.
    \item[(d)] Time Series or not: Time series anomaly detection adds an additional layer of complexity. Specifically evaluation is impacted in this setting. 
\end{enumerate}

\paragraph{Motivation of the present work:} To our knowledge, there is no unifying principle which spans all the above mentioned aspects in anomaly detection. The aim of this article is to provide a simple foundational principle which can assist in all the settings above. Specifically, using the PAC learning framework we propose a simple principle -- \textbf{Multiplication with the \rnthm{} derivative improves the performance}. And since PAC learning is a broad framework, this insight spans all the above mentioned aspects.

\subsection{Contributions}

The key contribution of this paper is an overarching technique for anomaly detection irrespective of the type of supervision, frequency of anomalies, size of the dataset or dimensionality. The technique is based on an elegant yet simple mathematical framework which connects both supervised and unsupervised anomaly detection paradigms without requiring changes in model architecture or optimization procedures. It is important to observe that the elegance lies in correcting the distributional differences between the training and evaluation distributions (Details are provided in \cref{sec:3}).\\  
\textit{Theoretical contributions:} We
\begin{enumerate}
    \item[(A)] Introduce a weighted loss function based on the \rnthm{} derivative (termed “RN-Loss” in this paper), tailored for supervised anomaly detection. This implies, one can estimate/approximate the \rnthm{} derivative using class dependent weights, resulting in the weighted loss function.
    \item[(B)] Show that, in the context of unsupervised anomaly detection, popular and time-tested algorithms such as cluster based local outlier factor (CBLOF) and its variants can be derived using \rnthm{} derivative based correction. This correction enables using the same framework in unsupervised anomaly detection.
    \item[(C)] Introduce the problem of (PAC-)learnability of anomaly detection problem in \cref{sec:2}. 
    \item[(D)] Show that anomaly detection is indeed PAC learnable in \cref{sec:3}. 
\end{enumerate}
The theoretical contributions and derivation provide the foundational principles and significant practical insights. We state them below and leverage these crucial facts throughout the remainder of the article.
\begin{itemize}
    \item Product of the original loss function and the \rnthm{} derivative improves the loss function. This is theoretically demonstrated in \cref{sec:31} and empirically validated in \cref{sec:4}.  
    \item \rnthm{} derivative offers a mathematically grounded abstraction for designing the loss functions for anomaly detection. (See \cref{sec:3})
    \item Depending on the context (supervised/unsupervised), the \rnthm{} derivative can take different forms. (Derivations in \cref{sec:32})
\end{itemize}

\paragraph{Empirical Contributions:} The proposed RN-Loss framework has several practical advantages
\begin{enumerate}
    \item[1.] RN-Loss maintains \emph{computational efficiency} by building on base loss functions like Binary Cross-Entropy,
    \item[2.] RN-Loss can be readily incorporated into existing training pipelines, requiring no changes to model architectures or optimization procedures.
    \item[3.] Unsupervised methods like dBTAI~\cite{dBTAI} benefit from using a modified version of RN-Loss. Using RN-Loss makes it capable of identifying anomalies even when the model is trained solely on normal data. 
    \item[4.] The loss function also demonstrates \emph{flexibility}, fitting varied data distributions such as Weibull and Log-normal without requiring structural changes. 
\end{enumerate}
In summary, these properties make RN-Loss a robust and adaptable solution for anomaly detection, offering improved performance metrics, computational efficiency, and versatility across a wide range of real-world applications.

\paragraph{Empirical Performance:} The RN-Loss function delivers significant improvements in anomaly detection, offering both \emph{enhanced performance} and \emph{broad adaptability}. It surpasses prior state-of-the-art (SoTA) methods, improving F1 scores on 68\% and Recall on 46\% of the multivariate datasets, with similar trends observed in univariate time-series data (F1: 72\%, Recall: 83\%). These results highlight its \emph{consistent effectiveness across diverse benchmarks}.

Our experiments further demonstrate that RN-Loss substantially enhances the performance of unsupervised anomaly detection methods—\emph{specifically the vanilla implementations of CBLOF and ECBLOF \cite{ECBLOF} when integrated with clustering algorithms such as K-Means \cite{CBLOF} and dBTAI \cite{dBTAI}}. The enhanced KMeans-CBLOF configuration achieved superior results on 93\% of univariate datasets (27 out of 29) and on 48\% of multivariate datasets (32 out of 67), relative to the original version. Although dBTAI previously achieved SoTA performance, its evaluation metrics—particularly precision—were inflated. By incorporating RN-Loss, these metrics were better calibrated, and the modified dBTAI maintained or improved overall performance across 59 multivariate datasets, while showing increased recall on nearly all univariate datasets. \cref{fig:summary} provides a visual summary the results.

\begin{figure}[t]
    \centering
    \includegraphics[width=\linewidth]{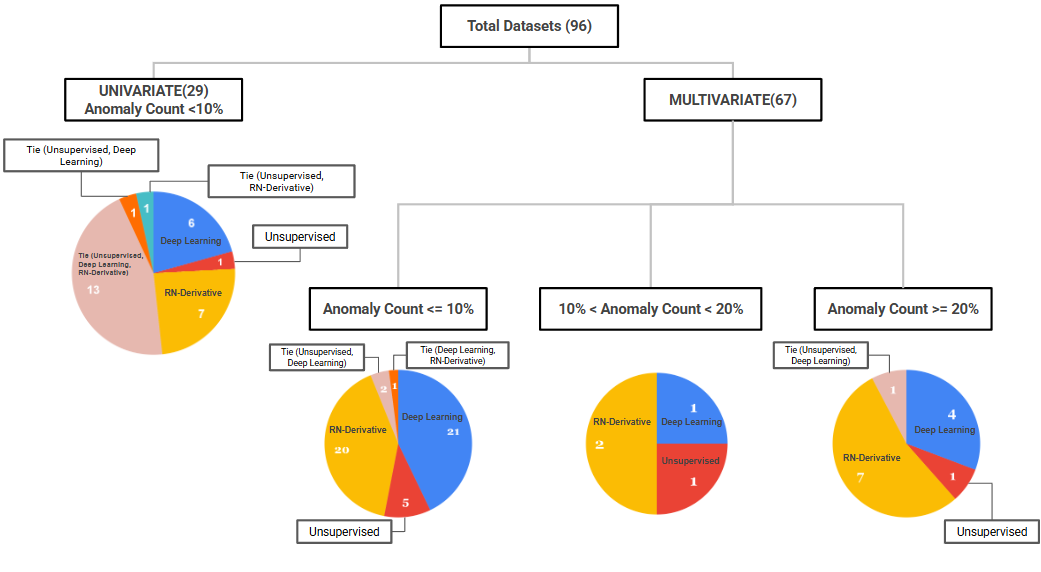}
    \Description{A bar chart}
    \caption{Comparative Analysis of Anomaly Detection Algorithms. Performance evaluation prioritizes recall, with precision as the secondary metric for tied cases. The comparison spans three algorithm categories: (1) Deep Learning approaches (AutoEncoders, DAGMM, DevNet, GAN, DeepSAD, FTTransformer (current state-of-the-art), and PReNet), (2) Unsupervised methods (LOF, Elliptic Envelope, Isolation Forest, dBTAI, MGBTAI, and quantile-based approaches including q-LSTM variants and QReg), and (3) RN-Derivative algorithms (RN-Net and RN-LSTM). RN-Net outperforms state-of-the-art methods on 68\% of Multivariate datasets (based on F-1 scores), while the RN-LSTM + RN-Net combination achieves peak F1-scores on 72\% of time series (Univariate) datasets. For detailed numerical comparisons}
    \label{fig:summary}
\end{figure}

\section{Problem Setup and Notation}
\label{sec:2}

Let $\featspace \subset \mathbb{R}^d$ denote the feature space. We assume that the sample is obtained from a \emph{mixture of inline and anomaly distributions}. Let $D_I$ denote the inline distribution and $D_{A}$ denote the anomaly distribution. If the \emph{contamination ratio} or \emph{anomalous ratio} is given by $\alpha$, the sample is obtained from
\begin{equation}
    D^{\alpha} = (1-\alpha) D_I + \alpha D_A
\end{equation}
Note that support of both $D_I$ and $D_A$ distributions is assumed to be $\featspace$. As stated before, anomaly detection can either be supervised or unsupervised.

\paragraph{Supervised Anomaly Detection:} In the supervised case, we assume to also have access to labels, $\labelspace = \{0, 1\}$ where $1$ indicates that the sample belongs to the inline distribution $D_I$ and $0$ indicates that the sample belongs to the anomaly distribution $D_A$. 

We denote the joint distribution of $n$ i.i.d samples from a given distribution $D$ using $D^n$. If $S \coloneqq \{(\vx^i, y^i)\}$ denotes the sample of size $n$ drawn i.i.d from $D^{\alpha}$, this is assumed to have the joint distribution $D^{\alpha,n}$.

Given a sample of points $S \sim D^{\alpha,n}$ the aim is to obtain a classifier $f$ such that, for any sample $\vx$ from $D^{\alpha}$ : (i) if $\vx$ is sampled from $D_{A}$ then identify it as \emph{anomaly} and (ii) if $\vx$ is sampled from $D_{I}$ identify it as \emph{non-anomaly} or \emph{inline}. Note that this becomes a binary classification problem

\paragraph{Unsupervised Anomaly Detection:} Given a sample of points $S \coloneqq \{(\vx^i)\}$ drawn i.i.d from $D^{\alpha}$, the aim is to obtain a classifier $f$ such that, for any sample $\vx$ : (i) if $\vx$ is sampled from $D_{A}$ then identify it as \emph{anomaly} and (ii) if $\vx$ is sampled from $D_{I}$ identify it as \emph{non-anomaly}. \textbf{Remark:} For the sake of generality, we assume that the sample is obtained from the contaminated distribution $D^{\alpha}$ instead of the inline distribution $D_I$.  

\paragraph{Space of Distributions:} If there is no restriction on the set of possible distributions $D_{XY}$, no-free-lunch theorem \cite{Wolpert} suggests that anomaly detection is impossible. So, we restrict the set of distributions to a set $\mathscr{D}_{X}$.

\paragraph{Hypothesis Space, Loss function and Risks :} Let $\hypospace \subset \{h : \featspace \to \{0,1\}\}$ denote the set of functions from which we choose our classifier $f$. Any specific $h \in \hypospace$ is referred to as hypothesis function or simply function if the context is clear. We consider 0-1 loss function -- $\ell : \labelspace \times \labelspace \to \{0,1\}$ where $\ell(y_1, y_2) = 0$ if $y_1 = y_2$ and $\ell(y_1, y_2) = 1$ otherwise.

The risk associated with a hypothesis is defined as the expected loss when the hypothesis is used for identifying anomalies for a given distribution $D \in \mathscr{D}_{X}$  
\begin{equation}
    R_{D}(h) = E_{D}[\ell(h(\vx), y)]
\end{equation}

\subsection{PAC Learning Framework}

We use the following definition of PAC learning \cite{Ratsaby2008} in this article -- 

\paragraph{PAC Learning:} A hypothesis class $\hypospace$ is PAC learnable if there exists (i) an algorithm $\mathbf{A} : \cup_{n=1}^{\infty} (\featspace \times \labelspace)^n \to \hypospace$, (ii) a decreasing sequence $\epsilon(n) \to 0$, and (iii) for all $D \in \mathscr{D}_{X}$
\begin{equation}
    E_{S \sim D^{n}} [R_{D}(\mathbf{A}(S)) - \inf_{h \in \hypospace}R_{D}(h)] \leq \epsilon(n)
    \label{eq:standardpac}
\end{equation}
Existing results \cite{Uniconv} show that if $\hypospace$ has finite VC dimension, then it is learnable for all possible distributions $\mathscr{D}_{XY}$. Specifically, neural networks are PAC learnable.

\begin{quote}
\it
    An important subtlety in the PAC learning framework is the implicit assumption that the train and evaluation distributions are the same. However, this is not true for anomaly detection.     
\end{quote}

\paragraph{PAC Learning for Anomaly Detection:} Define $\alpha-$risk to quantify the expected loss when \emph{anomaly contamination ratio} is $\alpha \in [0,1)$. That is, $D^{\alpha} = (1-\alpha) D_I + \alpha D_A$
\begin{equation}
    R_{D}^{\alpha}(h) = (1-\alpha) R_{D_I}(h) + \alpha R_{D_A}(h)
\end{equation}

A hypothesis class $\hypospace$ is PAC learnable for Anomaly Detection if there exists (i) an algorithm $\mathbf{A} : \cup_{n=1}^{\infty} (\featspace \times \labelspace)^n \to \hypospace$, (ii) a decreasing sequence $\epsilon(n) \to 0$, and (iii) for all $D_{XY} \in \mathscr{D}_{XY}$
\begin{equation}
    E_{S \sim D^{\alpha, n}} [R_{D}^{0.5}(\mathbf{A}(S)) - \inf_{h \in \hypospace} R_{D}^{0.5}(h)] \leq \epsilon(n) 
    \label{eq:anomalypac}
\end{equation}

\textbf{Important Remark:} In words, while learning is done using the samples are obtained from specific $\alpha$, the evaluation is fixed at $\alpha=0.5$. This is because, in practice one is interested in the metrics associated with anomaly class. Thus, convergence with respect to the risks $R_{D}^{\alpha}$ is not consistent with anomaly detection. Instead, one has to consider the risks $R_{D}^{0.5}$. As we shall shortly see, this observation allows us to correct the loss function easily using \rnthm{} derivative.

\section{RN-Loss : Derivation from first principles}
\label{sec:3}

In this section, we 
\begin{enumerate}
    \item[1.] Answer the question -- \emph{When is a hypothesis class $\hypospace$ PAC Learnable for Anomaly Detection?}
    \item[2.] Propose RN-Loss, which is a generic way to design loss functions for anomaly detection. We illustrate this by applying the RN-Loss principle in both supervised and unsupervised context.
\end{enumerate}

\subsection{PAC Learnable for Anomaly Detection}
\label{sec:31}

Recall the \rnthm{} theorem from measure theory.
\begin{theorem}[\rnthm{} \cite{RN_Thm}]
\label{thm:1}
    Let $(\Omega,\mathcal{A})$ be a measurable space with $\mathcal{A}$ as the $\sigma$ algebra and $\mu, \nu$ denote two $\sigma-$finite measures such that $\nu << \mu$ ($\nu$ is absolutely continuous with respect to $\mu$). Then, there exists a function $f$ such that,
    \begin{equation}
        \nu(A) = \int_{A} f d \mu \quad (or) \quad d \nu = f d \mu
    \end{equation}
    where A $\in \mathcal{A}$.
\end{theorem}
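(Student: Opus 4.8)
The statement to be proved is the Radon--Nikod\'{y}m theorem itself, so the plan is to give the standard functional-analytic (von Neumann) argument: first reduce to finite measures, establish the result there via Hilbert-space duality, and then bootstrap to the $\sigma$-finite case.

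\textbf{Step 1 (reduction and Hilbert-space setup).} First I would assume $\mu$ and $\nu$ are both finite; the passage to the $\sigma$-finite case is handled at the end. Set $\varphi = \mu + \nu$, which is again a finite measure, and consider the map $T : g \mapsto \int_\Omega g\, d\nu$ on the Hilbert space $L^2(\Omega,\mathcal{A},\varphi)$. Since $\nu \le \varphi$, Cauchy--Schwarz gives $|T(g)| \le \int |g|\, d\nu \le \int |g|\, d\varphi \le \varphi(\Omega)^{1/2}\,\lVert g\rVert_{L^2(\varphi)}$, so $T$ is a bounded linear functional. By the Riesz representation theorem there exists $h \in L^2(\varphi)$ with $\int_\Omega g\, d\nu = \int_\Omega g h\, d\varphi$ for every $g \in L^2(\varphi)$.

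\textbf{Step 2 (pinning down $h$).} Testing this identity against $g = \mathbf{1}_A$ for $A \in \mathcal{A}$ yields $\nu(A) = \int_A h\, d\varphi$. Choosing $A = \{h < 0\}$ forces $\varphi(A) = 0$, and choosing $A = \{h \ge 1\}$ gives $\nu(A) = \int_A h\, d\varphi \ge \varphi(A) = \mu(A) + \nu(A)$, hence $\mu(A) = 0$; by the hypothesis $\nu \ll \mu$ this gives $\nu(A) = 0$ and therefore $\varphi(A) = 0$. Thus $0 \le h < 1$ holds $\varphi$-almost everywhere. This is the single place where absolute continuity is used, and it is the crux of the proof.

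\textbf{Step 3 (extracting the density).} Rewriting the identity as $\int g(1-h)\, d\nu = \int g h\, d\mu$ and substituting $g = (1 + h + \cdots + h^n)\mathbf{1}_A$ gives $\int_A (1 - h^{n+1})\, d\nu = \int_A h(1 + h + \cdots + h^n)\, d\mu$. Letting $n \to \infty$ and applying the monotone convergence theorem on both sides (legitimate since $0 \le h < 1$ $\varphi$-a.e., so $1 - h^{n+1} \uparrow 1$ and the partial sums increase to $h/(1-h)$) produces $\nu(A) = \int_A f\, d\mu$ with $f := \sum_{k \ge 1} h^k = h/(1-h)$, a nonnegative measurable function. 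This settles the finite case. For the $\sigma$-finite case, partition $\Omega$ into countably many disjoint sets on which both $\mu$ and $\nu$ are finite (the common refinement of the two exhausting sequences), apply the finite case on each piece, and sum the resulting densities; monotone convergence shows the sum represents $\nu$ against $\mu$. Uniqueness $\mu$-a.e.\ is the standard fact that $\int_A (f - f')\, d\mu = 0$ for all $A \in \mathcal{A}$ forces $f = f'$ $\mu$-a.e.

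\textbf{Expected obstacle.} The genuine difficulty is not computational but structural: seeing that moving to $L^2(\mu+\nu)$ turns the problem into an application of Riesz representation. After that, the only delicate verification is $h < 1$ $\varphi$-a.e.\ in Step 2, which is precisely where $\nu \ll \mu$ is indispensable (and without which the conclusion is false, e.g.\ for mutually singular measures). An alternative route avoiding Hilbert space---taking $f$ to be a maximizer of $\int g\, d\mu$ over $\{g \ge 0 : \int_A g\, d\mu \le \nu(A)\ \forall A\}$ and using a Hahn decomposition of $\nu(\cdot) - \int_{(\cdot)} f\, d\mu - \epsilon\mu$ to rule out a nonzero remainder---works as well, but is more laborious, so I would present the von Neumann argument as the main proof.
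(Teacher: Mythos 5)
Your proposal is a correct and essentially complete rendition of the standard von Neumann argument: the reduction to finite measures, the Riesz representation of $g \mapsto \int g\, d\nu$ on $L^2(\mu+\nu)$, the verification that the representing function $h$ satisfies $0 \le h < 1$ a.e.\ (with absolute continuity entering exactly at the set $\{h \ge 1\}$), the geometric-series/monotone-convergence extraction of $f = h/(1-h)$, and the patching over a countable partition for the $\sigma$-finite case are all sound. Note, however, that the paper does not prove this statement at all: Theorem~\ref{thm:1} is quoted as a classical result with a citation, and is used only as a black box to justify the existence of the density $f$ in the later derivations. So there is no in-paper proof to compare against; your contribution is to supply a self-contained proof where the authors rely on the literature, and the von Neumann route you chose is arguably the cleanest such proof (the alternative Hahn-decomposition/maximal-function argument you mention at the end is also valid but, as you say, more laborious). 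The only points worth tightening if this were to be written out in full are minor: justify $\varphi(\{h<0\})=0$ via the strict negativity of $h$ on that set (or by exhausting with $\{h \le -1/n\}$), and record the a.e.\ uniqueness of $f$ explicitly, since the later use of the theorem treats $d\nu/d\mu$ as well defined.
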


\paragraph{Assumption 1 (Absolute Continuity):} For any fixed $\alpha \in (0,1)$, let $\nu$ denote the measure induced by $D^{0.5}$  and $\mu$ denote the measure induced by $D^{\alpha}$. We assume that \emph{$\nu$ is absolutely continuous with respect to $\mu$} ( $\nu << \mu$). This is a reasonable practical assumption since both $D^{0.5}$ and $D^{\alpha}$ are a mixture of same distributions $D_{I}, D_{A}$.

\paragraph{Assumption 2 (Boundedness):} From Assumption 1 and \cref{thm:1} we have that there exists a \rnthm{} derivative $f$ relating $\nu, \mu$ as -- $d \nu = f d \mu$. We further assume that this is bounded, i.e $1/b < f < b$  for some $b$ on the support of $\mu$. This is a reasonable assumption as well since both $\mu, \nu$ represent mixtures of the same underlying distributions.

We then have the following proposition:

\begin{proposition}
\label{thm:2}
    Let $\nu$ denote the measure induced by $D_{XY}^{0.5}$  and $\mu$ denote the measure induced by $D_{XY}^{\alpha}$. Also, let $d \nu = f d \mu$ (absolutely continuous) where $1/b < f < b$ for some $b < \infty$ on the support of $\mu$. For all $h \in \hypospace$, there exists a $\Delta_{\mu, \nu}$ such that,
    \begin{equation}
        \frac{1}{\Delta_{\mu, \nu}} \leq \frac{R_{D}^{0.5}(h)}{R_{D}^{\alpha}(h)} \leq \Delta_{\mu, \nu}
    \end{equation}
\end{proposition}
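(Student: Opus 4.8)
The plan is to rewrite both risks as integrals against the single base measure $\mu$ and then push the two-sided bound on the \rnthm{} derivative through the integral. Using the definition of the $0.5$-risk and the $\alpha$-risk as expected $0$--$1$ losses, I would first record
\begin{equation}
    R_{D}^{0.5}(h) = \int_{\featspace \times \labelspace} \ell(h(\vx), y)\, d\nu, \qquad R_{D}^{\alpha}(h) = \int_{\featspace \times \labelspace} \ell(h(\vx), y)\, d\mu .
\end{equation}
By Assumption 1 and \cref{thm:1} we have $d\nu = f\, d\mu$, so the change-of-measure identity gives $R_{D}^{0.5}(h) = \int_{\featspace \times \labelspace} \ell(h(\vx), y)\, f\, d\mu$. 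Because the loss is nonnegative and, by Assumption 2, $1/b < f < b$ on $\mathrm{supp}(\mu)$ (outside which $\mu$, and hence $\nu$ since $\nu \ll \mu$, vanishes), I can pull these bounds through the integral to obtain
\begin{equation}
    \frac{1}{b}\, R_{D}^{\alpha}(h) \;\leq\; R_{D}^{0.5}(h) \;\leq\; b\, R_{D}^{\alpha}(h) .
\end{equation}

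Assuming $R_{D}^{\alpha}(h) > 0$, dividing through by $R_{D}^{\alpha}(h)$ and setting $\Delta_{\mu,\nu} = b$ then gives exactly $1/\Delta_{\mu,\nu} \le R_{D}^{0.5}(h)/R_{D}^{\alpha}(h) \le \Delta_{\mu,\nu}$. I would stress that $b$ is supplied purely by Assumption 2 and is independent of $h$, which is what makes the notation $\Delta_{\mu,\nu}$ — a constant attached to the pair of measures and uniform over all of $\hypospace$ — appropriate.

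The only configuration needing separate handling is $R_{D}^{\alpha}(h) = 0$: then $\ell(h(\vx),y) = 0$ holds $\mu$-almost everywhere, hence $\nu$-almost everywhere by absolute continuity, so $R_{D}^{0.5}(h) = 0$ too and the ratio is the indeterminate $0/0$. I would resolve this either by restricting the statement to hypotheses with $R_{D}^{\alpha}(h) > 0$ or by adopting the convention $0/0 = 1$, both of which are consistent with the displayed inequality. The rest is routine measure-theoretic bookkeeping — measurability of $(\vx,y) \mapsto \ell(h(\vx),y)$, finiteness of both integrals (immediate from $0 \le \ell \le 1$ and $\mu,\nu$ being probability measures), and validity of the change of measure — so I do not anticipate a real obstacle; the single point worth spelling out is that Assumption 2 is invoked only on $\mathrm{supp}(\mu)$, with its complement being $\nu$-null.
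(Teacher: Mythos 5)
Your proposal is correct and follows essentially the same route as the paper's proof: express both risks as integrals against $\mu$ via $d\nu = f\,d\mu$, use the boundedness $1/b < f < b$ on the support of $\mu$ together with nonnegativity of the $0$--$1$ loss, and take $\Delta_{\mu,\nu} = b$. Your extra handling of the $R_{D}^{\alpha}(h) = 0$ case and the explicit remark that $\Delta_{\mu,\nu}$ is uniform over $\hypospace$ are refinements the paper leaves implicit, but the argument is the same.
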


\begin{proof}
Recall, that 
\begin{equation}
    R_{D}^{0.5}(h) = \int \ell(h(\vx),y) d\nu = \int \ell(h(\vx),y) f d\mu \quad and \quad R_{D}^{\alpha} = \int \ell(h(\vx),y) d\mu
    \label{eq:8}
\end{equation}
So, we have
\begin{equation}
    \frac{R_{D}^{0.5}(h)}{R_{D}^{\alpha}(h)} = \frac{\int \ell(h(\vx),y)f d\mu}{\int \ell(h(\vx),y) d\mu}
\end{equation}
Now, observe that $\ell$ is taken to be 0-1 loss, $f$ is bounded by $1/b$ and $b$ on the support of $\mu$, and $\int d\mu = 1$ (probability measure). Note that the bound $b$ depends on the distributions $\mu, \nu$. Hence, we have 
\begin{equation}
    \frac{1}{\Delta_{\mu, \nu}} \leq \frac{R_{D}^{0.5}(h)}{R_{D}^{\alpha}(h)} \leq \Delta_{\mu, \nu}
\end{equation}
for some constant $\Delta_{\mu, \nu}$
\end{proof}

\begin{theorem}
    If $\hypospace$ is PAC Learnable, then $\hypospace$ is PAC Learnable for Anomaly Detection.
\end{theorem}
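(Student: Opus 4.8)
The plan is to turn the assumed PAC learner for $\hypospace$ under the 0-1 loss into a learner meeting the anomaly-detection criterion \eqref{eq:anomalypac}, using the \rnthm{} derivative $f$ of \cref{thm:2} as an importance weight. Everything hinges on one identity. Writing $\tilde\ell_h(\vx,y)\coloneqq f(\vx,y)\,\ell(h(\vx),y)$ and using the relation $d\nu=f\,d\mu$ supplied by Assumption~1 and \cref{thm:1}, together with the reformulation already used in \eqref{eq:8},
\begin{equation}
  \E_{(\vx,y)\sim D^{\alpha}}\!\bigl[\tilde\ell_h(\vx,y)\bigr]\;=\;\int f\,\ell(h(\vx),y)\,d\mu\;=\;\int \ell(h(\vx),y)\,d\nu\;=\;R_D^{0.5}(h)\qquad\text{for every }h\in\hypospace .
\end{equation}
Hence the $D^{\alpha}$-risk of the \emph{weighted} loss is literally the $D^{0.5}$-risk of the plain loss, so a learner that minimizes the former from $\alpha$-contaminated labeled samples---which is exactly the input allowed by the definition---is automatically a learner for the latter. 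It is worth emphasizing that we need this \emph{exact} equality, not merely the two-sided ratio bound of \cref{thm:2}: a constant-factor distortion of risks by $\Delta_{\mu,\nu}$ does not in general preserve convergence of the \emph{excess} risk $R_D^{0.5}(h)-\inf_{h'}R_D^{0.5}(h')$ in the agnostic regime, whereas reweighting by $f$ converts the target into an honest risk-minimization problem for a new, bounded loss.

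I would then proceed in three steps. First, since $\hypospace$ is PAC learnable under the 0-1 loss it has the uniform convergence property---equivalently, finite VC dimension, the same fact the excerpt invokes for the standard setting---so empirical averages of $\ell(h(\cdot),\cdot)$ converge to their expectations uniformly over $\hypospace$ at a distribution-free rate. Second, transfer this to $\tilde\ell$: by Assumption~2, $1/b<f<b$ on the support of $\mu$, so $\{\tilde\ell_h:h\in\hypospace\}$ is uniformly bounded by $b$ and is a fixed bounded rescaling of the 0-1 loss class; its covering numbers are those of the 0-1 loss class at scale $b$, so it is still a uniform Glivenko--Cantelli class, now over $S\sim D^{\alpha,n}$, at a rate $\epsilon'(n)\to 0$ depending only on the complexity of $\hypospace$ and on $b$. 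Third, let $\mathbf{A}_{\mathrm{AD}}$ be empirical-risk minimization for $\tilde\ell$ on $S$; the standard ERM estimate gives $R_D^{0.5}(\mathbf{A}_{\mathrm{AD}}(S))-\inf_{h}R_D^{0.5}(h)\le 2\sup_{h}\bigl|\widehat R_S^{\tilde\ell}(h)-R_D^{0.5}(h)\bigr|$, and taking the expectation over $S\sim D^{\alpha,n}$ yields \eqref{eq:anomalypac} with $\epsilon(n)\le 2\epsilon'(n)$.

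The step that needs real care---the main obstacle---is that $\tilde\ell_h$ is defined through $f$, which depends on the unknown distribution, so $\mathbf{A}_{\mathrm{AD}}$ as stated is not yet a legitimate distribution-independent algorithm. Here the structure of the anomaly-detection problem rescues us: since the learner is handed labels, and $D^{0.5},D^{\alpha}$ are mixtures of the same $D_I,D_A$, the derivative is the two-valued class weight $f(\vx,y)=\tfrac{1}{2(1-\alpha)}$ for $y=1$ and $\tfrac{1}{2\alpha}$ for $y=0$---precisely the ``RN-Loss is a weighted loss'' statement of contribution~(A), and automatically bounded for fixed $\alpha\in(0,1)$ so that Assumptions~1--2 hold here for free---depending on $D$ only through the scalar $\alpha$. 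One can then either estimate $\alpha$ by the empirical anomaly fraction $\hat\alpha=\tfrac1n\#\{i:y^i=0\}\to\alpha$ and plug it into the weights (the induced perturbation of $\widehat R_S^{\tilde\ell}$ is $o(1)$ uniformly over $\hypospace$, hence irrelevant in the limit), or avoid $\alpha$ altogether via the equivalent \emph{balanced} ERM $\mathbf{A}_{\mathrm{AD}}(S)=\argmin_{h}\bigl[\tfrac12\widehat R_{S_I}(h)+\tfrac12\widehat R_{S_A}(h)\bigr]$, where $S_I,S_A$ are the inlier and anomaly subsamples of $S$; conditioning on the split, these are i.i.d.\ samples from $D_I$ and $D_A$ whose sizes concentrate around $(1-\alpha)n$ and $\alpha n$, and since $\alpha\in(0,1)$ is fixed both grow with $n$, so the uniform convergence of the first step applies to each empirical risk and the sum of the two errors tends to zero. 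Either route yields a bona fide learner for which \eqref{eq:anomalypac} holds, which is the claim.
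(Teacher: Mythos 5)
Your proposal is correct in outline, but it takes a genuinely different route from the paper's proof. The paper keeps the original learner $\mathbf{A}$ unchanged: it sandwiches $R_D^{0.5}(\mathbf{A}(S))$ and $\inf_{h}R_D^{0.5}(h)$ between constant multiples of the corresponding $\alpha$-risks via the ratio bound of \cref{thm:2}, reaches the decomposition in \cref{eq:15}, and then eliminates the residual term $\bigl(\Delta_{\mu,\nu}-1/\Delta_{\mu,\nu}\bigr)\,\E_{S}\bigl[\inf_{h}R_D^{\alpha}(h)\bigr]$ by invoking the realizability assumption, so the same algorithm certifies \cref{eq:anomalypac} (with $\epsilon(n)$ inflated by the constant $\Delta_{\mu,\nu}$). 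You instead build a new algorithm: the exact change-of-measure identity $\E_{D^{\alpha}}[f\,\ell]=R_D^{0.5}(h)$ turns the target into honest minimization of a bounded weighted loss, handled by (plug-in or balanced) ERM plus uniform convergence. What your route buys: it is agnostic---your remark that a constant-factor distortion does not control the excess risk is precisely why the paper needs realizability, which you avoid---and it explains why the RN-weighted loss is the right training objective, making it closer in spirit to \cref{sec:32} than to the paper's own proof. What it costs: the step ``PAC learnable $\Rightarrow$ uniform convergence/finite VC'' is automatic only for distribution-free learnability, not for learnability restricted to $\mathscr{D}_{X}$ as in the paper's definition (the paper implicitly assumes finite-VC classes anyway), and the plug-in weight $1/(2\hat\alpha)$ is unbounded when $\hat\alpha=0$, so one should truncate it or use your balanced-ERM variant, which is cleaner. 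With those caveats, your argument is a valid and, in the agnostic regime, stronger proof.
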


\begin{proof}
    Assume that $D^{\alpha} \in \mathscr{D}_{X}$ for all $\alpha \in (0,1)$. If $\hypospace$ is PAC learnable, we have that there exists
    \begin{enumerate}
        \item[(i)] an algorithm $\mathbf{A} : \cup_{n=1}^{\infty} (\featspace \times \labelspace)^n \to \hypospace$
        \item[(ii)] a decreasing sequence $\epsilon(n) \to 0$ such that,
    \end{enumerate}
    for all $D \in \mathscr{D}_{X}$
    \begin{equation}
        E_{S \sim D^n}[R_{D}(\mathbf{A}(S)) - \inf_{h \in \hypospace} R_{D}(h)] \leq \epsilon(n)
    \end{equation}

    Now, from \cref{thm:2}, we have
    \begin{equation}
        R_{D}^{0.5}(\mathbf{A}(S))  \leq R_{D}^{\alpha}(\mathbf{A}(S)) \times \Delta_{\mu, \nu}
    \end{equation}
    and,
    \begin{equation}
            \inf_{h \in \hypospace} R_{D}^{0.5}(h) \geq \frac{1}{\Delta_{\mu,\nu}} \inf_{h \in \hypospace} R_{D}^{\alpha}(h) 
    \end{equation}
    Hence,
    \begin{equation}
    \begin{aligned}
        R_{D}^{0.5}(\mathbf{A}(S)) - &\inf_{h \in \hypospace} R_{D}^{0.5}(h)
         \leq R_{D}^{\alpha}(\mathbf{A}(S)) \times \Delta_{\mu, \nu} - \frac{1}{\Delta_{\mu,\nu}} \inf_{h \in \hypospace} R_{D}^{\alpha}(h) \\
        &\leq \Delta_{\mu, \nu} \left[ R_{D}^{\alpha}(\mathbf{A}(S)) -  \inf_{h \in \hypospace} R_{D}^{\alpha}(h) \right] 
         + \inf_{h \in \hypospace} R_{D}^{\alpha}(h) \left[ - \frac{1}{\Delta_{\mu,\nu}} +  \Delta_{\mu, \nu}\right]
    \end{aligned}    
    \end{equation}
    Taking expectations on both sides,
    \begin{equation}
    \begin{aligned}
        E_{S \sim D^{\alpha, n}} [R_{D}^{0.5}(\mathbf{A}(S)) - \inf_{h \in \hypospace} R_{D}^{0.5}(h)] 
        & \leq \Delta_{\mu, \nu} E_{S \sim D^{\alpha, n}} [R_{D}^{\alpha}(\mathbf{A}(S)) - \inf_{h \in \hypospace} R_{D}^{\alpha}(h)] \\
        & + \left[ - \frac{1}{\Delta_{\mu,\nu}} +  \Delta_{\mu, \nu}\right] E_{S \sim D^{\alpha, n}} \left[\inf_{h \in \hypospace} R_{D}^{\alpha}(h)\right]
    \end{aligned}    
    \label{eq:15}
    \end{equation}
    Using the \underline{Realizability Assumption} of the PAC learning framework, we have that
    \begin{equation}
        E_{S \sim D^{\alpha, n}} \left[\inf_{h \in \hypospace} R_{D}^{\alpha}(h)\right] = 0.
    \end{equation}
    Thus, we have that for the same algorithm $\mathbf{A}(S)$ and sequence $\epsilon(n)$,
    \begin{equation}
    E_{S \sim D^{\alpha, n}} [R_{D}^{0.5}(\mathbf{A}(S)) - \inf_{h \in \hypospace} R_{D}^{0.5}(h)] \leq \epsilon(n)
    \end{equation}

    Hence $\hypospace$ is PAC learnable for anomaly detection as well. 
\end{proof}

The above proof offers some significant insights into learning for Anomaly Detection:
\begin{itemize}
    \item[1.] The algorithm $\mathbf{A}$ does not have to change significantly. The only change required is to modify the algorithm to suit \cref{eq:anomalypac} instead of \cref{eq:standardpac}.
    \item[2.] The modification of $\mathbf{A}$ depends on the \rnthm{} derivative. This can be observed from \cref{eq:15}, where the constant $\Delta_{\mu, \nu}$ plays a key role in ensuring convergence. \cref{thm:2} and \cref{eq:8} shows that $\Delta_{\mu, \nu}$ is the constant depending on \rnthm{} derivative.
    \item[3.] The hypothesis class $\hypospace$ should be large enough to accommodate all possible distributions $D^{\alpha}$. However, thanks to recent advances in machine learning, this is not a strong practical restriction. Standard classes such as neural networks and even boosted trees satisfy this assumption. 
    \item[4.] \emph{Impact of Realizability Assumption:} In the standard PAC learning where the realizability assumption can be easily overcome by considering the bayes optimal classifier. However, for anomaly detection  realizability assumption becomes crucial for ensuring learnability. This can be observed from \cref{eq:15}. Nevertheless, this is not a strong restriction in practice. 
\end{itemize}

\subsection{Estimating \rnthm{} derivatives in different contexts}
\label{sec:32}

We now dive deeper into how the algorithm $\mathbf{A}$ must be adapted for anomaly detection. The \underline{overarching principle} is to estimate the \rnthm{} derivative $f \approx \hat{f}$, so that the loss function $\ell(h(x),y)$ is transformed to $\ell(h(x),y)\hat{f}$. We refer to this as \textbf{RN-Loss}. 

\subsubsection{Case 1: Supervised Anomaly Detection} Recall, $D_I$ and $D_A$ are two probability distributions on a measurable space $\featspace \times \labelspace$. Denote their respective densities by $p_I(\vx,y)$ and  $p_A(\vx,y)$. Recall, for $\alpha \in [0,1]$, $D^{\alpha}$ is given by $(1-\alpha) D_I + \alpha D_A$. Denote by $\mu$ the probability measure induced by $D^{\alpha}$. Equivalently, $\mu$ has density
\begin{equation}
  \mu(\vx,y) = (1-\alpha) p_I(\vx,y) + \alpha p_A(\vx,y).
\end{equation}
We also consider distribution $D^{0.5}$, whose induced measure is denoted $\nu$, with density
\begin{equation}
  \nu(\vx,y) = 0.5 p_I(\vx,y) + 0.5 p_A(\vx,y)
\end{equation}
$\nu$ is absolutely continuous with respect to $\mu$ $\nu << \mu$. By the \rnthm{} theorem, there is a function $f(\vx,y)$ such that
\begin{equation}
   d \nu = f d \mu \quad \text{or} \quad  \frac{d\nu}{d\mu}(\vx,y) = f(\vx,y).
\end{equation}
Under the usual condition that $\mu(\vx,y) > 0$, we obtain
\begin{equation}
  f(x,y) = \frac{\nu(\vx,y)}{\mu(\vx,y)} =\frac{0.5p_I(\vx,y) + 0.5p_A(\vx,y)}
       {(1-\alpha)p_I(\vx,y) + \alpha p_A(\vx,y)}.
  \label{eq:RNderiv}
\end{equation}
\emph{The key idea is to use the empirical distribution function to estimate $f$}. Given a sample $\{\vx^i, y^i\}$ from $D^{\alpha}$ where $y^{i} = 1$ if it belongs to $D_I$ and $y^{i} = 0$ if it belongs to $D_A$. From above we have,
\begin{equation}
    f(\vx, +1) = \frac{0.5}{1-\alpha} \quad and \quad f(\vx, 0) = \frac{0.5}{\alpha}
\end{equation}

\textbf{Important Remark:} Interestingly, due to the setup we have that the weights $0.5/(1-\alpha), 0.5/\alpha$ \underline{does not depend on} $\vx$. This is because the \rnthm{} derivative includes the \emph{same} distribution both in the numerator and denominator in each of the cases. 

Since, constant multiples do not effect the optimization, we make another simplification as follows -- Let the weight of the anomalous class $D_A$ to be $1$, samples from the inline class $D_I$ are then reweighted by
\begin{equation}
    \omega = \frac{\alpha}{1-\alpha} \approx \frac{\# \text{Anomalies}}{\# \text{Inline Samples}}
\end{equation}

Thus, in the case of supervised anomaly detection one only needs to adjust the weights of the samples when learning the hypothesis $h$.

\subsubsection{Case 2: Unsupervised Anomaly Detection}

In the case of unsupervised anomaly detection, one does not have access to sample anomalies. In such cases, one makes additional assumptions such as gaussian distribution to obtain the hypothesis function. The most popular assumption is that the sample is obtained from \emph{mixture of gaussian}.  Here, we illustrate how the \rnthm{} derivative correction is applied.

Let $\featspace \subset \mathbb{R}^d$ be the data space. Given a sample $S = \{ \vx_i \}_{i=1}^{n}$ drawn from the distribution $D^{\alpha}$, we aim to assign an anomaly score to each $x \in \featspace$. Moreover, since we do not have a explicit loss function, one uses a heuristic $h(x)$ to assign scores. In this case we have $\ell(h(\vx),y) \approx h(x)$ -- That is we replace the loss itself with the heuristic scores. Accordingly following our \rnthm{} derivative principle we correct the heuristic $h(x)$ with the \rnthm{} derivative $f(x)$ to obtain $h(x)f(x)$ as the final scoring rule.

\paragraph{Hypothesis Function $h^*$:} Any point can be considered an anomaly if it lies far-away from the center. Thus, under the assumption of mixture-of-gaussians, we have the following anomaly score
\begin{equation}
    h^*(x) = \argmin_i \| x - \mu_i \|
    \label{eq:a31}
\end{equation}
where $\mu_i$ denotes the centroid of the $i$th Gaussian. In words, we consider the distance of the point from the nearest cluster.

\paragraph{Distribution assumption of $h^*$:} Let $\{C_1, C_2, \cdots, C_m\}$ denote the clusters and $p_{C_i}(x)$ denote the distribution of each cluster $C_i$. We have the following implicit \emph{distribution assumption}
\begin{equation}
    D^{\alpha} = \frac{1}{m} \sum_{i=1}^{m} p_{C_i}(x)
\end{equation}
That is, samples from each cluster are equally likely. 

\paragraph{Evaluation Distribution of $h^*$:} However, for $h^*$ defined above to be effective, one needs \emph{same scale of the nearest neighbors irrespective of the variance of gaussians}. Thus, to apply (and consequently to evaluate) the above hypothesis, we assume that the distribution is
\begin{equation}
    D^{0.5} = \frac{1}{m} \sum_{i=1}^{m} \gamma_i p_{C_i}(x)
\end{equation}
where $\gamma_i$ denotes the proportion of the samples from $C_i$. For the scale to be equivalent, one must obtain a larger sample from clusters with larger variance. 

\paragraph{Correcting this Discrepancy using \rnthm{} Derivative:} Following the reasoning from above, we correct the discrepancy using the \rnthm{} derivative which is given by
\begin{equation}
    f(x) = \sum_{i=1}^{m} \gamma_i I[x \in C_i]
\end{equation}
If we estimate $\gamma_i$ from the data, we get $\gamma_i \approx |C_i|$. 

\paragraph{Final Algorithm for Anomaly Detection:} We further also use a ``filtering'' to ignore small clusters. This leads to the following
\begin{itemize}
    \item[1.] Cluster the sample into clusters $\{C_1, C_2, \cdots, C_m\}$, obtain the $m$ corresponding means $\mu_k$
    \item[2.] Consider only the largest $k < m$ clusters to estimate the inline density $p_I$. This filtering step allows us to estimate $p_I$.
    \item[3.] Obtain the base-scores $h^*(x)$ using \cref{eq:a31} and the cluster centroids.
    \item[4.] Adjust the scores using \rnthm{} derivative correction.
        \begin{equation}
            h^{\dagger}(x) =
            \begin{cases}
                |C_i| d(x, C_i), & \text{if } x \in C_i, C_i \text{ is large} \\
                |C_i| d(x, C_L), & \text{if } x \in C_i, C_L \text{ is the nearest large cluster}
            \end{cases}
            \label{eq:final_cblof}
        \end{equation}
        where $C_L$ is the closest large cluster. 
\end{itemize}
This algorithm is the widely adopted cluster based local outlier factor (CBLOF) \cite{CBLOF}.

We also remark that there exists other modified versions of CBLOF. 
\begin{itemize}
    \item[(A)] ECBLOF (Enhanced Cluster-Based Local Outlier Factor)\cite{ECBLOF}, which assumes 
    \begin{equation}
        D^{0.5} = \frac{1}{m} \sum_{i=1}^{m} p_{C_i}(x)
    \end{equation}
    In this case there is no \rnthm{} derivative correction required.
    \item[(B)] Observe that, we estimated $\gamma_i \approx |C_i|$. However, one can use techniques such as kernel density estimate (KDE) to explicitly estimate the \rnthm{} derivative.
\end{itemize}

\textbf{Key Takeaway:} The most important aspect to note here is that -- \rnthm{} derivative can be applied in wide variety of contexts to obtain practically useful algorithms. Further it also has a flexibility of adapting to various application scenarios depending on the information available.

\section{Empirical Evaluations}
\label{sec:4}
In this section, we aim to show that \textbf{The \rnthm{} derivative correction proposed above demonstrates performance that is on par with, and in many cases surpasses, that of several state-of-the-art techniques.}.

We first discuss the experimental settings in detail in \cref{sec:41}, and discuss the observations in \cref{sec:42}.

\subsection{Experimental Settings}
\label{sec:41}

\begin{figure}
  \centering
    \begin{subfigure}[b]{0.33\linewidth} 
      \includegraphics[width=1.1\linewidth]{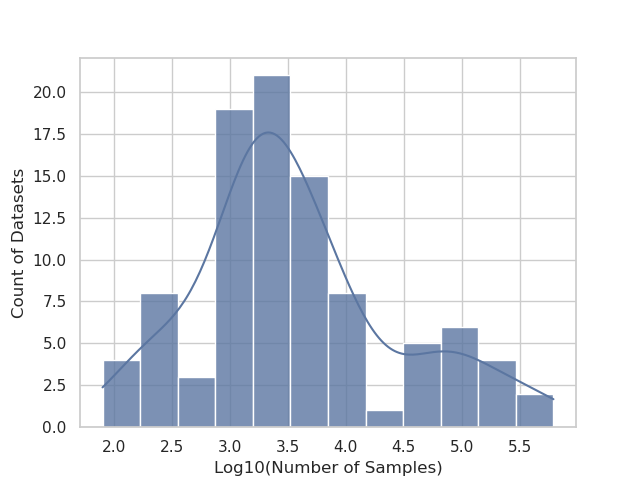}
      \Description{}
      \caption{}
      \label{fig:exp1a}
    \end{subfigure}%
    \hfill %
    \begin{subfigure}[b]{0.33\linewidth} 
      \includegraphics[width=1.1\linewidth]{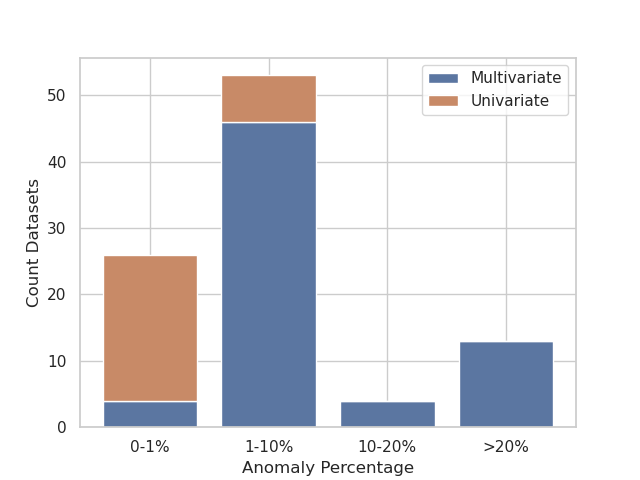}
      \Description{}
      \caption{}
      \label{fig:exp1b}
    \end{subfigure}
    \hfill %
    \begin{subfigure}[b]{0.33\linewidth} 
      \includegraphics[width=1.1\linewidth]{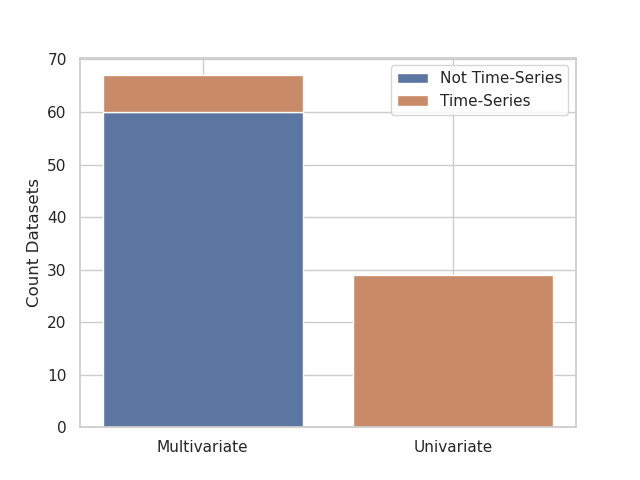}
      \Description{}
      \caption{}
      \label{fig:exp1c}
    \end{subfigure}
  \caption{Overview of Datasets: Observe that the datasets considered have wide range of total sizes, anomaly percentages, and as well as diversity with respect to other characteristics such as Univariate vs Multivariate, Time-Series vs Non-Time-Series. (a) shows the number of datasets with different number of samples. Observe that the sizes vary from $80$ to $619,329$. (b) Shows the number of datasets with respect to percentage anomalies. We consider 4 ranges corresponding to ``very less'', ``less'', ``medium'' and ``large'' number of anomalies. (c) indicates number of datasets which has a time-series characteristic.}
  \label{fig:exp1}
\end{figure}

\subsubsection{Datasets for Empirical Validation:} As discussed earlier, there exists several factors which effect the practical aspects of anomaly detection. Hence, empirical validation must be conducted on a wide variety of datasets. In this article, we consider 96 datasets to cover the range of aspects of anomaly detection. \cref{fig:exp1} shows provides an overview of the datasets considered.
\begin{enumerate}
    \item[(i)] Firstly, we consider both univariate and multivariate datasets. In total we consider 29 univariate datasets and 67 multivariate datasets.
    \item[(ii)] We consider datasets of different sizes ranging from $80$ to $619,329$. \cref{fig:exp1a} shows the distribution of the sizes within the 96 datasets we consider.
    \item[(iii)] We also consider datasets with wide-ranging anomaly percentages, from  0.03\% to 43.51\%. However, for simplicity we consider 4 ranges - $0-1\%$, $1-10\%$, $10-20\%$ and $> 20\%$. There are   22 univariate and 4 multivariate datasets in the range $0-1\%$, 40 multivariate and 7 univariate datasets in the range $1-10\%$, 3 multivariate datasets in the range $10-20\%$ and 16 in $> 20\%$ range. \cref{fig:exp1b} illustrates this. 
    \item[(iv)] We also consider the time-series aspect. However, since non time-series univariate datasets are not complex this aspect is not considered. In total, we consider 29 univariate time-dependent and 7 multivariate time-dependent datasets. \cref{fig:exp1c} illustrates this. 
    \item[(v)] Further, datasets cover multiple domains such as finance, healthcare, e-commerce, industrial systems, telecommunications, astronautics, computer vision, forensics, botany, sociology, linguistics, etc.
\end{enumerate}
We source these datasets from a combination of ESA-ADB dataset \cite{I44} along with six other SWaT datasets \cite{SWaT} and a BATADAL dataset\cite{taormina18battle}

\subsubsection{Evaluation of Anomaly Detection:} We split the datasets into train/test as follows -- We use 70\% of the inline data for training and 30\% of the inline data for testing. However, we only use $15\%$ of the anomalous data for training and 85\% of the anomalous data for testing. This procedure is follows to ensure robustness of the results. Further we made sure that the \emph{anomaly contamination}(15\%) in the train set is less than or equal to the other baseline methods.

Since anomalies are rare and the datasets are imbalanced we use -- \emph{Precision, Recall, AUROC and F1 score} -- for all the evaluations. 

\subsubsection{Baselines and SoTA:} There exists several algorithms for anomaly detection. These algorithms include Local Outlier Factor(LOF), Isolation Forest (IForest), One-class SVM (OCSVM), Autoencoders, Deep Autoencoding Gaussian Mixture Model (DAGMM)\cite{DAGMM}, Quantile LSTM(q-LSTM)\cite{I49}, Deep Quantile Regression \cite{I77}, GNN \cite{GDN}, GAN \cite{I27}, DevNet \cite{DevNet}, MGBTAI\cite{MGBTAI} and d-BTAI\cite{dBTAI} as covered in \cref{tab:algo_list} in \cref{appendix:baseline}. Above is a mix of supervised and unsupervised methods, forming our baselines for comparison on anomalous datasets. Other important algorithms that have been tested along with the above are ECOD \cite{ECOD}, COPOD \cite{COPOD}, KNN, LUNAR \cite{LUNAR}, PCA, DSVDD, NeuTraL-AD \cite{NeuTraL-AD}, ICL \cite{ICL}, SLAD \cite{SLAD}.

\subsubsection{Tuning Hyperparameters:} In the domain of anomaly detection, determining optimal threshold values is crucial due to the inherently rare and imbalanced nature of anomalies. Setting the threshold too low may lead to a high number of false positives, reducing the model's significance, and setting it too high may cause the model to miss critical anomalies. Therefore, aiming at the most optimal model performance, we set the following thresholds in accordance with the suggestions from the literature:
\begin{itemize}
    \item For Autoencoders, the lower threshold is set at the $0.75$th percentile, and the upper threshold is at the $99.25$th percentile of Mean Squared Error(MSE) values.
    \item  All the data points with discriminator scores less than $10$th percentile were considered anomalies for GANs.
    \item DAGMM had a dual threshold setting with high and low thresholds, with two standard deviations above and below the mean
    \item For the tree-based approaches, MGBTAI was set to a minimum clustering threshold of 20\% of the dataset size and leaf level threshold of 4, while for d-BTAI, the minimum clustering threshold was set to 10\% of the dataset size
    \item For deep quantile regression, the lower threshold was at $0.9$th percentile and the upper at $99.1$st percentile of the predicted values
\end{itemize}
As can be observed this process becomes extremely meticulous and takes up most of the experimental time.
For RN-Loss, we automate our threshold calculation process by maximizing the difference in True positive and False positive rates, which are very important metrics obtained from the AUC-ROC curve. This helps us get the best optimal threshold, and SoTA results across datasets and overall algorithms. The thresholds range from 0.001 to 0.999 and can be found for all the respective datasets in \cref{appendix:baseline}, \cref{table:5}.

\subsubsection{Network Architecture:} In the \underline{supervised} case -- We use two architectures in our study. First, \textbf{RN-Net} is a ReLU feedforward network with RN-Loss, comprising 64 hidden units in a binary classification setting. We train for 50 epochs using the Adam optimizer. Additionally, we use batch normalization~\cite{Norm}, dropout~\cite{Dropout}, and early stopping with a threshold of 10 epochs. We reduce our learning rate by half every 5 epochs until it reaches $10^{-6}$. Parallel to this, we integrated L2 regularisation with RN-Net and noticed a further step-up in performance across datasets. Similarly, to demonstrate the flexibility and adaptability of RN-Loss, we create \textbf{RN-LSTM}: A LSTM with 32 hidden units coupled with the RN-Loss function.

In the \underline{unsupervised} case, we use 3 different algorithms which are modified using \rnthm{} derivative -- (i) \textbf{Kmeans(CBLOF)} uses K-Means to estimate the clusters and applies CBLOF (ii) \textbf{Kmeans(CBLOF, Mod.)} estimates the \rnthm{} using kernel density estimates, and (iii) \textbf{dBTAI(Mod.)} modies the dBTAI algorithm in \cite{dBTAI} using the ECBLOF loss. 

\textbf{Remark:} The code and other resources are provided in the the anonymous repo - \url{https://anonymous.4open.science/r/RN_Derivative_Official/}. Note that we perform our experiments on 96 different datasets and 23 different algorithms. To maintain clarity and focus, we report only the most salient aspects of the results. Please refer to the repo for the extensive list of all the results.

\subsection{Empirical Findings}
\label{sec:42}

\begin{figure}
  \centering
    \begin{subfigure}[b]{0.3\linewidth} 
      \includegraphics[width=1\linewidth]{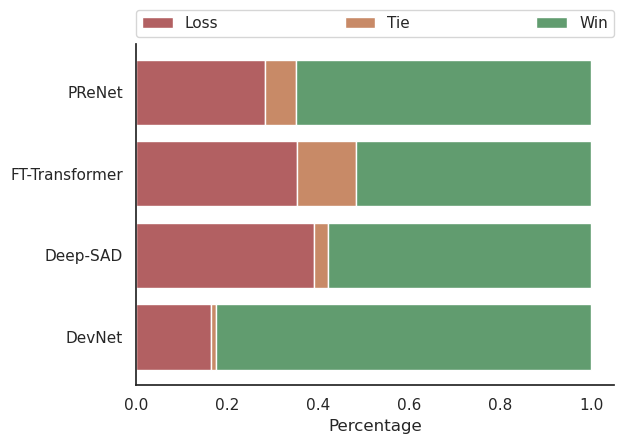}
      \caption{AUROC}
      \label{fig:exp2a}
    \end{subfigure}%
    \hfill %
    \begin{subfigure}[b]{0.3\linewidth} 
      \includegraphics[width=1\linewidth]{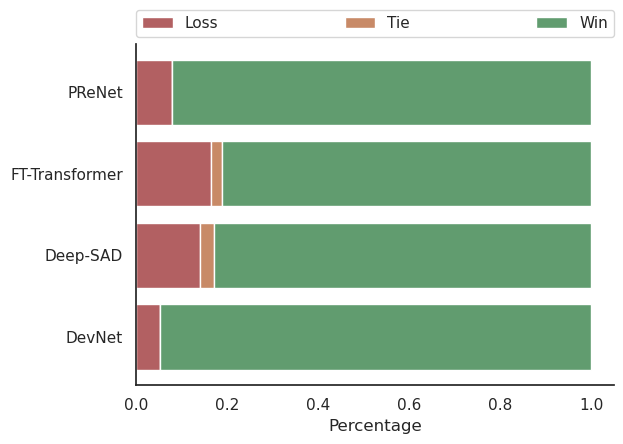}
      \caption{F1}
      \label{fig:exp2b}
    \end{subfigure}%
    \hfill %
    \begin{subfigure}[b]{0.3\linewidth}
    \scriptsize
      \begin{tabular}{c|c}      
      \hline
       Algorithm & Avg. Rank \\
       \hline
        RN-Net      & 2.088542 \\
        Deep-SAD    & 2.585938 \\
        FT-Transformer& 2.617647 \\
        PReNet      & 3.107955 \\
        DevNet      & 3.681319 \\
        \hline
      \end{tabular}
      \vspace{3em}
    \caption{Avg. Rank}
      \label{fig:exp2f}
    \end{subfigure}%
    
    \begin{subfigure}[b]{0.33\linewidth} 
      \includegraphics[width=1\linewidth]{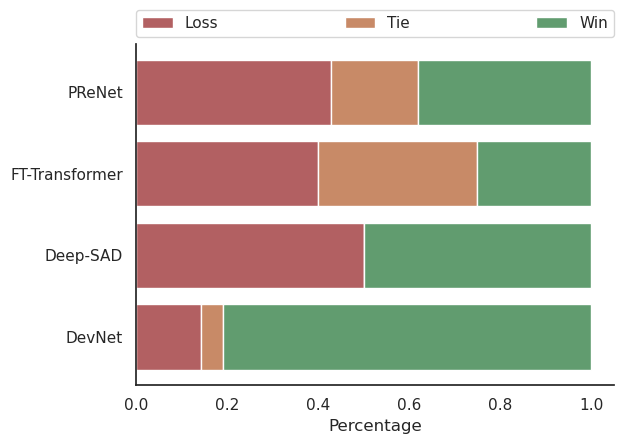}
      \caption{0-1\% Anomalies (AUROC)}
      \label{fig:exp2c}
    \end{subfigure}%
    \hfill %
    \begin{subfigure}[b]{0.33\linewidth} 
      \includegraphics[width=1\linewidth]{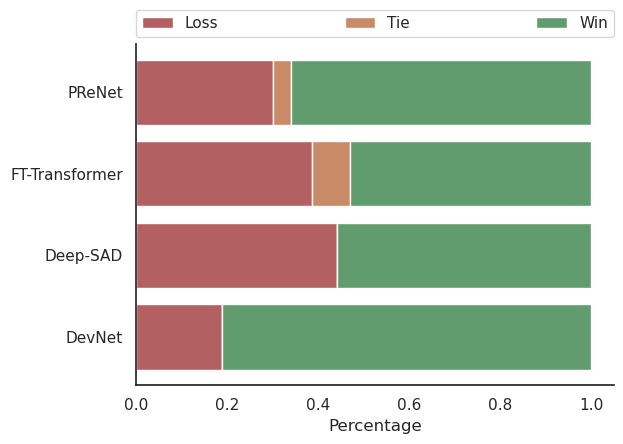}
      \caption{1-10\% Anomalies (AUROC)}
      \label{fig:exp2d}
    \end{subfigure}%
    \hfill %
    \begin{subfigure}[b]{0.33\linewidth} 
      \includegraphics[width=1\linewidth]{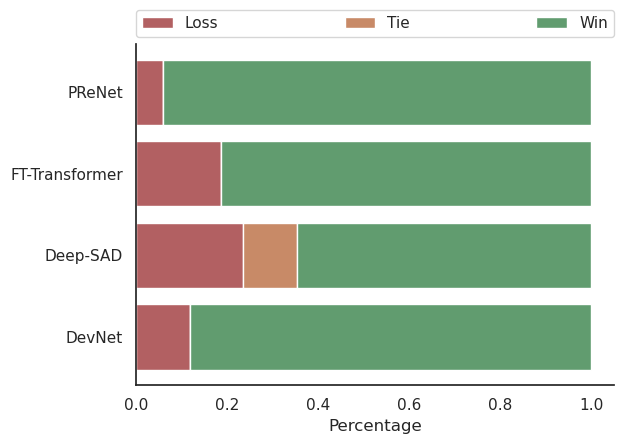}
      \caption{$>10\%$ Anomalies (AUROC)}
      \label{fig:exp2e}
    \end{subfigure}%

    \begin{subfigure}[b]{0.33\linewidth} 
      \includegraphics[width=1\linewidth]{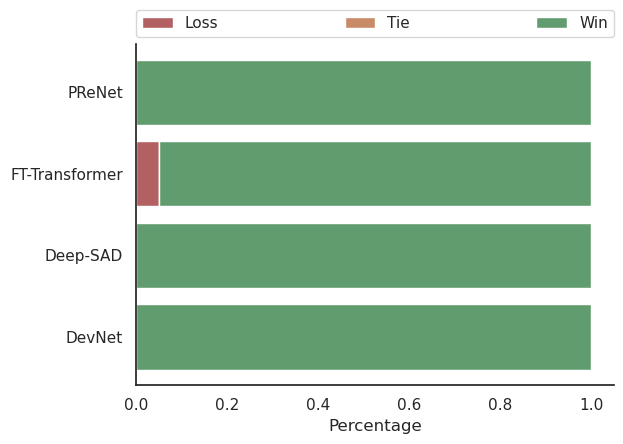}
      \caption{0-1\% Anomalies (F1)}
      \label{fig:exp2g}
    \end{subfigure}%
    \hfill %
    \begin{subfigure}[b]{0.33\linewidth} 
      \includegraphics[width=1\linewidth]{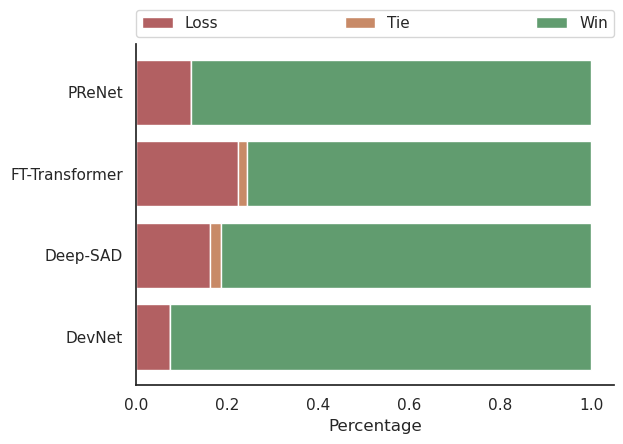}
      \caption{1-10\% Anomalies (F1)}
      \label{fig:exp2h}
    \end{subfigure}%
    \hfill %
    \begin{subfigure}[b]{0.33\linewidth} 
      \includegraphics[width=1\linewidth]{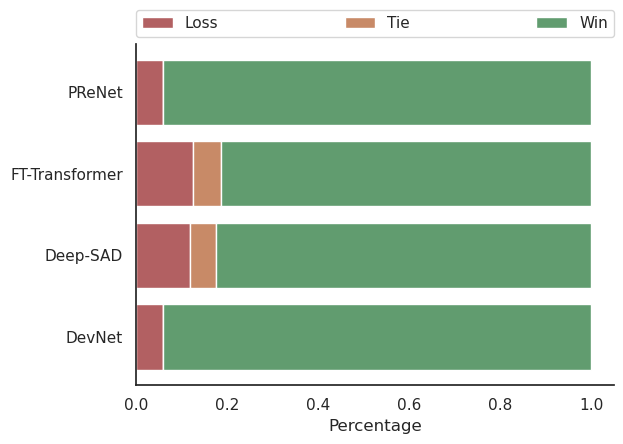}
      \caption{$>10\%$ Anomalies (F1)}
      \label{fig:exp2i}
    \end{subfigure}%
    \Description{}
  \caption{Performance of RN-Loss for supervised anomaly detection. \legendsquare{color1} indicates the percentage of the datasets in which the specific algorithm performs better than RN-Net. \legendsquare{color2} indicates the percentage of the datasets in which the specific algorithm tied with RN-Net. \legendsquare{color3} indicates the percentage of the datasets in which the specific algorithm performs worse than RN-Net. (a) shows the performance of RN-Loss with respect to AUROC. (b) shows the performance of the RN-Loss with respect to F1-score. (d)-(f) shows the performance of RN-Loss when considering various ranges of anamoly percentages.}
  \label{fig:exp2}
\end{figure}

\begin{figure}
    \centering
    \begin{subfigure}[b]{0.45\linewidth} 
      \includegraphics[width=\linewidth]{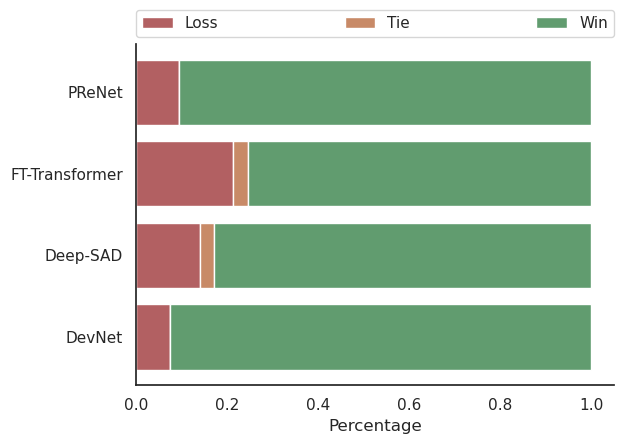}
      \caption{Multivariate (F1)}
      \label{fig:exp3a}
    \end{subfigure}%
    \hfill %
    \begin{subfigure}[b]{0.45\linewidth} 
      \includegraphics[width=\linewidth]{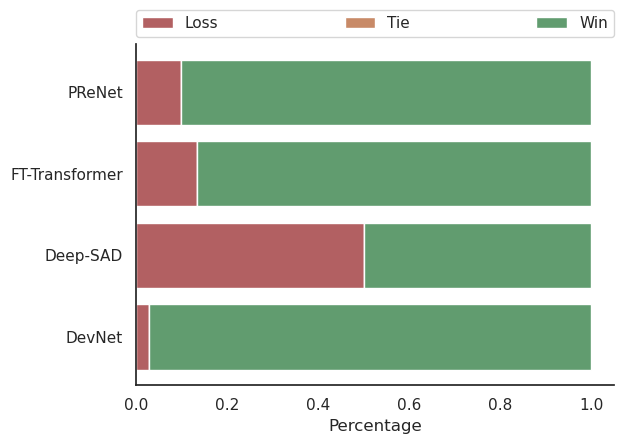}
      \caption{Time-Series (F1)}
      \label{fig:exp3b}
    \end{subfigure}%
    \Description{}
    \caption{Performance of RN-Net on Multivariate and Time-Series Datasets. (a) illustrates the results on multivariate datasets. (b) shows the results on time-series datasets.}
  \label{fig:exp3}
\end{figure}

\subsubsection{Performance of RN-Net under supervised settings}

\cref{fig:exp2} summarizes the results of RN-Loss when compared with recent state-of-the-art approaches -- DevNet\cite{DevNet}, FT-Transformer\cite{FTT}, Deep-SAD\cite{DeepSAD}, PReNet\cite{PReNet}. The key findings are :
\begin{enumerate}
    \item[(1)] Across datasets, RN-Net performs better than the recent SoTA approaches. 
    \item[(2)] Performance with respect to F1 score of RN-Loss is better than AUROC. This is thanks to the optimal and simple threshold finding for RN-Loss described above. 
    \item[(3)] Across all the datasets, the average rank (lower is better) for RN-Net is 2.08, Deep-SAD is 2.58, FT-Transformer is 2.61, PReNet is 3.10, DevNet is 3.68. These ranks are with respect to AUROC.
    \item[(4)] Performance of RN-Net is also superior across different anomaly ratios. In the region of $0-1\%$ anomalies, with respect to AUROC, RN-Loss is comparable to the SoTA approaches, with respect to F1 scores RN-Net outperforms all competing approaches. For $0-10\%$ anomalies and for $>10\%$ anomalies, RN-Net outperforms all competing approaches with respect to both AUROC and F1 scores.
\end{enumerate}

\subsubsection{Performance of RN-Net -- Multivariate Datasets:} RN-Net performs the best on 44 out of 67 datasets. The rest of the algorithms demonstrate varying performance across multiple datasets. FT-Transformer achieves the best results on 8 datasets in terms of precision, 2 datasets for recall, 9 datasets for F1-score, and 7 datasets for AUCROC. DeepSAD shows superior performance on 3 datasets for precision, 10 for recall, 9 for F1-score, and 8 for AUC-ROC. Meanwhile, PReNet achieves top performance on 2 datasets in terms of precision, 6 for recall, none for F1score, and 7 for AUCROC. This is summarized in \cref{fig:exp3a}.

\subsubsection{Performance of RN-Net -- Time-Series Datasets (Univariate and Multivariate):} Across 29 univariate time-series benchmark datasets RN-Net consistently demonstrates superior performance. Specifically, it achieves the highest precision and recall on 20 datasets, the best F1 score on 21, and the top AUC-ROC on 17. Recent models, including FT-Transformer, DeepSAD, and PReNet, yield only sporadic wins. RN-LSTM -- designed primarily to illustrate the adaptability of the proposed RN-loss—also performs competitively, though its effectiveness is somewhat limited by fixed timestep constraints and the extreme sparsity of anomalies. In multivariate settings (evaluated on SWaT, BATADAL, and ESA-ADB), RN-Net again exhibits optimal performance, achieving the highest F1 scores across \emph{all} datasets and maintaining consistently strong recall and AUC-ROC. These results on all the time-series datasets are summarized in \cref{fig:exp3b}. 

\subsubsection{Performance of \rnthm{} derivative correction in unsupervised setting:} Recall that the \rnthm{} correction is leads to 3 algorithms KMeans-CBLOF, KMeans-CBLOF-Mod, dbTAI-Mod (See \cref{sec:32}). As baselines, we consider both \emph{unsupervised methods} - GAN, q-LSTM (sigmoid), DAGMM as well as the recent state-of-the-art \emph{self-supervised methods} - SLAD, NeuTraL-AD, ICL, GAN. 

Across 96 diverse benchmarks spanning univariate and multivariate, time-series and non-time-series domains, the KMeans-CBLOF variant consistently outperformed the other \rnthm{}-corrected methods, achieving the highest AUROC on 37 datasets. Collectively, the trio -- KMeans-CBLOF, KMeans-CBLOF-Mod, and dbTAI -- attained top AUROC performance in 56 out of 96 datasets ($\sim$58\%), demonstrating broad competitiveness. Notably, KMeans-CBLOF was particularly effective under sparse anomaly conditions (0–1\% rate), leading in 16 of 26 such cases. While dbTAI achieved fewer AUROC wins, it attained the highest mean F1 score (0.197) and recall (0.703), suggesting a recall-focused trade-off and also displayed improved stability relative to KMeans-CBLOF-Mod. Among non-\rnthm{} baselines, q-LSTM (sigmoid) yielded the highest average AUROC (0.833), but was outperformed by the best of the \rnthm{}-corrected methods in 39 datasets by a margin of at least 0.05 AUROC. The largest observed AUROC gap, approximately 0.43, occurred on the \texttt{vowels} dataset. \cref{tab:1} reports the average rank of all algorithms, with \rnthm{}-corrected variants occupying 3 of the top 4 positions. Pairwise AUROC comparisons in \cref{fig:exp4} further validate the efficacy of simple derivative correction in enhancing classical unsupervised methods.

\begin{table}[t]
    \centering
    \caption{Average Rank (w.r.t AUROC) of Unsupervised Methods for Anomaly Detection. Observe that the \rnthm{} correction methods take 3 out of the top 4 positions.}
    \label{tab:1}
    \begin{tabular}{|l|c|}
    \hline
    Algorithm   &   Avg. Rank \\
    \hline
    KMeans-CBLOF    & 2.656627 \\
    KMeans-CBLOF-Mod    & 2.861446 \\
    q-LSTM (sigmoid)    & 3.086207 \\
    dbTAI-Mod   & 3.446429 \\
    DAGMM   & 3.859375 \\
    GAN & 4.861702 \\
    SLAD    & 5.231481 \\
    NeuTraL-AD  & 5.456897 \\
    ICL & 5.527778 \\
    \hline
    \end{tabular}
\end{table}

\begin{figure}
    \centering
    \includegraphics[width=\linewidth]{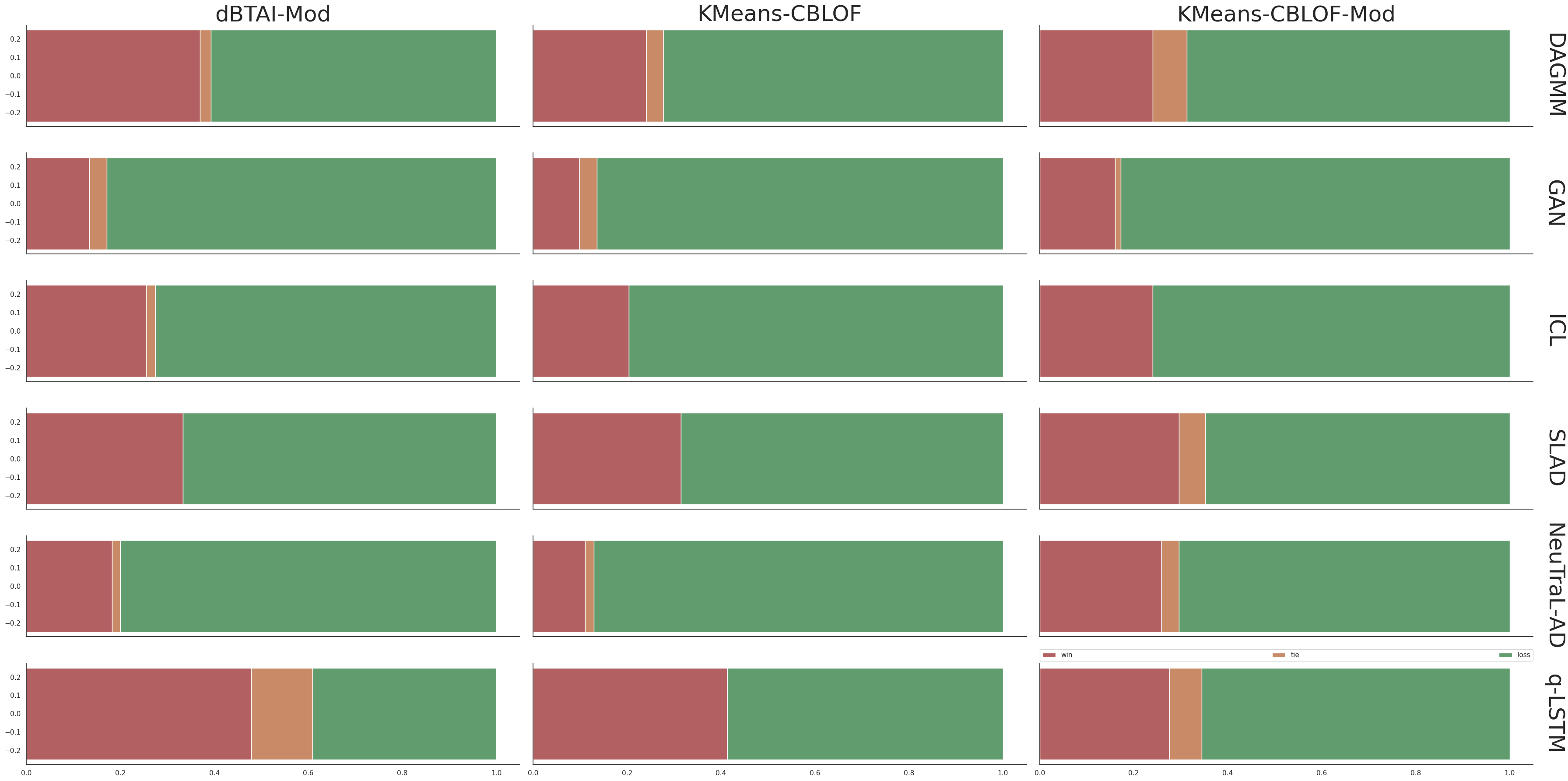}
    \Description{}
    \caption{Performance of \rnthm{} derivative correction of unsupervised algorithms with respect to AUROC. Observe that the \rnthm{} derivative corrected unsupervised algorithms -- dbTAI(Mod.), KMeans-CBLOF and KMeans-CBLOF(Mod.) perform better than recent state-of-the-art algorithms such as ICL, NeuTraL, SLAD etc.  \legendsquare{color1} indicates the percentage of datasets where the row algorithm outperforms the column algorithm. \legendsquare{color2} indicates the percentage where both algorithms perform equally. \legendsquare{color3} indicates the percentage where the row algorithm underperforms relative to the column algorithm. }
  \label{fig:exp4}
\end{figure}

\section{Conclusion}

Anomaly detection is a fundamental problem across multiple domains. Formally, an anomaly is any sample that does not belong to the underlying data distribution. However, identifying anomalies is challenging, particularly when the data distribution exhibits high variability. Despite its importance, the theoretical foundations of anomaly detection remain underexplored.

\emph{What is the right principle to design loss function for anomaly detection?} We show that the right principle should correct the  discrepancies between the distributions. This is easily achieved by weighing the generic loss function with \rnthm{} derivative. We prove this by establishing the PAC learnability of anomaly detection. We refer to this approach as RN-Loss. Notably, we show that (supervised) weight-adjusted loss functions and unsupervised Cluster-Based Local Outlier Factor (CBLOF) naturally emerge as performant and conceptual instances of this correction mechanism.

Empirical evaluations across 96 datasets demonstrate that weighting a standard loss function by the \rnthm{} derivative enhances performance, making RN-Loss a robust, efficient, and adaptable solution that outperforms state-of-the-art methods under varying anomaly contamination levels.

The integral representation of the weighted loss function via \rnthm{} derivative can be used for imbalanced class boundaries. An interesting regularization interpretation can also be handy in future. The sparsity of the derivative implies a sparsity-inducing regularization; conversely, if it is dense, it might be interpreted as a smoothing regularizer.

% We proposed a novel loss function, called RN-Loss, derived from the Radon–Nikod\'ym derivative, for anomaly detection in both supervised and unsupervised settings. We establish the learnability of anomaly detection when the anomaly contamination ratio in the dataset is known. Thus, the proposed RN-Loss is theoretically sound. 

% We  demonstrate that weighting a standard loss function by the RN derivative enhances performance. Empirical evaluations across 96 datasets confirm RN-Loss's robustness, efficiency, and adaptability, outperforming state-of-the-art methods in various anomaly contamination scenarios.

% We show that the impact of RN-Loss extends beyond improved classification performance; it enhances unsupervised clustering-based anomaly detection by addressing imbalanced distributions and optimizing threshold selection. The proposed method is flexible across different data distributions, thus it's applicability is broad, making it a significant advancement in the field.

% References
\bibliography{main}

\begin{thebibliography}{50}
\providecommand{\natexlab}[1]{#1}
\providecommand{\url}[1]{\texttt{#1}}
\expandafter\ifx\csname urlstyle\endcsname\relax
  \providecommand{\doi}[1]{doi: #1}\else
  \providecommand{\doi}{doi: \begingroup \urlstyle{rm}\Url}\fi

\bibitem[Ashiku and Dagli(2021)]{I17}
Lirim Ashiku and Cihan Dagli.
\newblock Network intrusion detection system using deep learning.
\newblock \emph{Procedia Computer Science}, 185:\penalty0 239--247, 2021.
\newblock ISSN 1877-0509.
\newblock \doi{https://doi.org/10.1016/j.procs.2021.05.025}.
\newblock URL \url{https://www.sciencedirect.com/science/article/pii/S1877050921011078}.
\newblock Big Data, IoT, and AI for a Smarter Future.

\bibitem[Bergman and Hoshen(2020)]{LUNAR}
Liron Bergman and Yedid Hoshen.
\newblock Classification-based anomaly detection for general data.
\newblock \emph{ArXiv}, abs/2005.02359, 2020.
\newblock URL \url{https://api.semanticscholar.org/CorpusID:211549689}.

\bibitem[Chawla et~al.(2003)Chawla, Japkowicz, and Kolcz]{I66}
Nitesh Chawla, Natalie Japkowicz, and Alesia Kolcz, editors.
\newblock \emph{Proceedings of the ICML'2003 Workshop on Learning from Imbalanced Data Sets}.
\newblock August 2003.
\newblock URL \url{http://www.site.uottawa.ca/~nat/Workshop2003/workshop2003.html}.
\newblock Workshop held in August 2003.

\bibitem[Chawla et~al.(2004)Chawla, Japkowicz, and Kotcz]{I67}
Nitesh~V. Chawla, Nathalie Japkowicz, and Aleksander Kotcz.
\newblock Editorial: special issue on learning from imbalanced data sets.
\newblock \emph{SIGKDD Explor. Newsl.}, 6\penalty0 (1):\penalty0 1–6, jun 2004.
\newblock ISSN 1931-0145.
\newblock \doi{10.1145/1007730.1007733}.
\newblock URL \url{https://doi.org/10.1145/1007730.1007733}.

\bibitem[Chhabra et~al.(2008)Chhabra, Scott, Kolaczyk, and Crovella]{I41}
P.~Chhabra, C.~Scott, E.~D. Kolaczyk, and M.~Crovella.
\newblock Distributed spatial anomaly detection.
\newblock In \emph{IEEE INFOCOM 2008 - The 27th Conference on Computer Communications}, pages 1705--1713, 2008.
\newblock \doi{10.1109/INFOCOM.2008.232}.

\bibitem[Deng and Hooi(2021)]{GDN}
Ailin Deng and Bryan Hooi.
\newblock Graph neural network-based anomaly detection in multivariate time series.
\newblock \emph{Proceedings of the AAAI Conference on Artificial Intelligence}, 35\penalty0 (5):\penalty0 4027--4035, May 2021.
\newblock \doi{10.1609/aaai.v35i5.16523}.
\newblock URL \url{https://ojs.aaai.org/index.php/AAAI/article/view/16523}.

\bibitem[Farshchi et~al.(2018)Farshchi, Weber, Della~Corte, Pecchia, Cinque, Schneider, and Grundy]{I22}
Mostafa Farshchi, Ingo Weber, Raffaele Della~Corte, Antonio Pecchia, Marcello Cinque, Jean-Guy Schneider, and John Grundy.
\newblock Contextual anomaly detection for a critical industrial system based on logs and metrics.
\newblock In \emph{2018 14th European Dependable Computing Conference (EDCC)}, pages 140--143, 2018.
\newblock \doi{10.1109/EDCC.2018.00033}.

\bibitem[Gan et~al.(2015)Gan, Wang, Yang, Yeung, and Hauptmann]{DevNet}
Chuang Gan, Naiyan Wang, Yi~Yang, Dit-Yan Yeung, and Alexander~G. Hauptmann.
\newblock Devnet: A deep event network for multimedia event detection and evidence recounting.
\newblock In \emph{2015 IEEE Conference on Computer Vision and Pattern Recognition (CVPR)}, pages 2568--2577, 2015.
\newblock \doi{10.1109/CVPR.2015.7298872}.

\bibitem[Gao(2009)]{CBLOF}
Zengan Gao.
\newblock Application of cluster-based local outlier factor algorithm in anti-money laundering.
\newblock In \emph{2009 International Conference on Management and Service Science}, pages 1--4, 2009.
\newblock \doi{10.1109/ICMSS.2009.5302396}.

\bibitem[Gorishniy et~al.(2021)Gorishniy, Rubachev, Khrulkov, and Babenko]{FTT}
Yury Gorishniy, Ivan Rubachev, Valentin Khrulkov, and Artem Babenko.
\newblock Revisiting deep learning models for tabular data.
\newblock In A.~Beygelzimer, Y.~Dauphin, P.~Liang, and J.~Wortman Vaughan, editors, \emph{Advances in Neural Information Processing Systems}, 2021.
\newblock URL \url{https://openreview.net/forum?id=i_Q1yrOegLY}.

\bibitem[Gupta et~al.(2014)Gupta, Gao, Aggarwal, and Han]{I21}
M.~Gupta, J.~Gao, Charu Aggarwal, and Jiawei Han.
\newblock Outlier detection for temporal data.
\newblock \emph{Synthesis Lectures on Data Mining and Knowledge Discovery}, 5:\penalty0 1--129, 01 2014.
\newblock \doi{10.2200/S00573ED1V01Y201403DMK008}.

\bibitem[Han et~al.(2022)Han, Hu, Huang, Jiang, and Zhao]{I43}
Songqiao Han, Xiyang Hu, Hailiang Huang, Mingqi Jiang, and Yue Zhao.
\newblock Adbench: Anomaly detection benchmark, 2022.
\newblock URL \url{https://arxiv.org/abs/2206.09426}.

\bibitem[Huet et~al.(2022)Huet, Navarro, and Rossi]{I73}
Alexis Huet, Jose~Manuel Navarro, and Dario Rossi.
\newblock Local evaluation of time series anomaly detection algorithms.
\newblock In \emph{Proceedings of the 28th ACM SIGKDD Conference on Knowledge Discovery and Data Mining}, KDD '22, page 635–645, New York, NY, USA, 2022. Association for Computing Machinery.
\newblock ISBN 9781450393850.
\newblock \doi{10.1145/3534678.3539339}.
\newblock URL \url{https://doi.org/10.1145/3534678.3539339}.

\bibitem[Ioffe and Szegedy(2015)]{Norm}
Sergey Ioffe and Christian Szegedy.
\newblock Batch normalization: Accelerating deep network training by reducing internal covariate shift.
\newblock 02 2015.

\bibitem[Jacob et~al.(2021)Jacob, Song, Stiegler, Rad, Diao, and Tatbul]{I71}
Vincent Jacob, Fei Song, Arnaud Stiegler, Bijan Rad, Yanlei Diao, and Nesime Tatbul.
\newblock Exathlon: a benchmark for explainable anomaly detection over time series.
\newblock \emph{Proc. VLDB Endow.}, 14\penalty0 (11):\penalty0 2613–2626, jul 2021.
\newblock ISSN 2150-8097.
\newblock \doi{10.14778/3476249.3476307}.
\newblock URL \url{https://doi.org/10.14778/3476249.3476307}.

\bibitem[Japkowicz and Stephen(2002)]{I68}
Nathalie Japkowicz and Shaju Stephen.
\newblock The class imbalance problem: A systematic study.
\newblock \emph{Intell. Data Anal.}, 6:\penalty0 429--449, 2002.
\newblock URL \url{https://api.semanticscholar.org/CorpusID:39321012}.

\bibitem[Kim et~al.(2022)Kim, Choi, Choi, Lee, and Yoon]{I76}
Siwon Kim, Kukjin Choi, Hyun-Soo Choi, Byunghan Lee, and Sungroh Yoon.
\newblock Towards a rigorous evaluation of time-series anomaly detection.
\newblock \emph{Proceedings of the AAAI Conference on Artificial Intelligence}, 36\penalty0 (7):\penalty0 7194--7201, Jun. 2022.
\newblock \doi{10.1609/aaai.v36i7.20680}.
\newblock URL \url{https://ojs.aaai.org/index.php/AAAI/article/view/20680}.

\bibitem[Kiran et~al.(2018)Kiran, Thomas, and Parakkal]{I70}
B.~Ravi Kiran, Dilip~Mathew Thomas, and Ranjith Parakkal.
\newblock An overview of deep learning based methods for unsupervised and semi-supervised anomaly detection in videos.
\newblock \emph{Journal of Imaging}, 4\penalty0 (2), 2018.
\newblock ISSN 2313-433X.
\newblock \doi{10.3390/jimaging4020036}.
\newblock URL \url{https://www.mdpi.com/2313-433X/4/2/36}.

\bibitem[Konstantopoulos et~al.(2011)Konstantopoulos, Zerakidze, and Sokhadze]{RN_Thm}
Takis Konstantopoulos, Zurab Zerakidze, and Grigol Sokhadze.
\newblock \emph{Radon--Nikod{\'y}m Theorem}, pages 1161--1164.
\newblock Springer Berlin Heidelberg, Berlin, Heidelberg, 2011.
\newblock ISBN 978-3-642-04898-2.
\newblock \doi{10.1007/978-3-642-04898-2_468}.
\newblock URL \url{https://doi.org/10.1007/978-3-642-04898-2_468}.

\bibitem[Kotowski et~al.(2024)Kotowski, Haskamp, Andrzejewski, Ruszczak, Nalepa, Lakey, Collins, Kolmas, Bartesaghi, Martinez-Heras, and Canio]{I44}
Krzysztof Kotowski, Christoph Haskamp, Jacek Andrzejewski, Bogdan Ruszczak, Jakub Nalepa, Daniel Lakey, Peter Collins, Aybike Kolmas, Mauro Bartesaghi, Jose Martinez-Heras, and Gabriele~De Canio.
\newblock European space agency benchmark for anomaly detection in satellite telemetry, 2024.
\newblock URL \url{https://arxiv.org/abs/2406.17826}.

\bibitem[Li et~al.(2020)Li, Zhao, Botta, Ionescu, and Hu]{COPOD}
Z.~Li, Y.~Zhao, N.~Botta, C.~Ionescu, and X.~Hu.
\newblock Copod: Copula-based outlier detection.
\newblock In \emph{2020 IEEE International Conference on Data Mining (ICDM)}, pages 1118--1123, Los Alamitos, CA, USA, nov 2020. IEEE Computer Society.
\newblock \doi{10.1109/ICDM50108.2020.00135}.
\newblock URL \url{https://doi.ieeecomputersociety.org/10.1109/ICDM50108.2020.00135}.

\bibitem[Li et~al.(2023)Li, Zhao, Hu, Botta, Ionescu, and Chen]{ECOD}
Zheng Li, Yue Zhao, Xiyang Hu, Nicola Botta, Cezar Ionescu, and George~H. Chen.
\newblock Ecod: Unsupervised outlier detection using empirical cumulative distribution functions.
\newblock \emph{IEEE Transactions on Knowledge and Data Engineering}, 35\penalty0 (12):\penalty0 12181--12193, 2023.
\newblock \doi{10.1109/TKDE.2022.3159580}.

\bibitem[Liu et~al.(2023)Liu, Xie, Wang, Li, Wang, Zheng, and Jin]{I51}
Jiaqi Liu, Guoyang Xie, Jinbao Wang, Shangnian Li, Chengjie Wang, Feng Zheng, and Yaochu Jin.
\newblock Deep {Industrial} {Image} {Anomaly} {Detection}: {A} {Survey}.
\newblock ArXiV-2301, 2023.
\newblock \doi{10.48550/ARXIV.2301.11514}.
\newblock URL \url{https://arxiv.org/abs/2301.11514}.

\bibitem[Maru and Kobayashi(2020)]{I39}
Chihiro Maru and Ichiro Kobayashi.
\newblock Collective anomaly detection for multivariate data using generative adversarial networks.
\newblock In \emph{2020 International Conference on Computational Science and Computational Intelligence (CSCI)}, pages 598--604, 2020.
\newblock \doi{10.1109/CSCI51800.2020.00106}.

\bibitem[Paparrizos et~al.(2022)Paparrizos, Kang, Boniol, Tsay, Palpanas, and Franklin]{I72}
John Paparrizos, Yuhao Kang, Paul Boniol, Ruey~S. Tsay, Themis Palpanas, and Michael~J. Franklin.
\newblock Tsb-uad: an end-to-end benchmark suite for univariate time-series anomaly detection.
\newblock \emph{Proc. VLDB Endow.}, 15\penalty0 (8):\penalty0 1697–1711, apr 2022.
\newblock ISSN 2150-8097.
\newblock \doi{10.14778/3529337.3529354}.
\newblock URL \url{https://doi.org/10.14778/3529337.3529354}.

\bibitem[Qiu et~al.(2021)Qiu, Pfrommer, Kloft, Mandt, and Rudolph]{NeuTraL-AD}
Chen Qiu, Timo Pfrommer, Marius Kloft, Stephan Mandt, and Maja Rudolph.
\newblock Neural transformation learning for deep anomaly detection beyond images.
\newblock In Marina Meila and Tong Zhang, editors, \emph{Proceedings of the 38th International Conference on Machine Learning}, volume 139 of \emph{Proceedings of Machine Learning Research}, pages 8703--8714. PMLR, 18--24 Jul 2021.
\newblock URL \url{https://proceedings.mlr.press/v139/qiu21a.html}.

\bibitem[Ratsaby(2008)]{Ratsaby2008}
Joel Ratsaby.
\newblock \emph{PAC Learning}, pages 622--624.
\newblock Springer US, Boston, MA, 2008.
\newblock ISBN 978-0-387-30162-4.
\newblock \doi{10.1007/978-0-387-30162-4_276}.
\newblock URL \url{https://doi.org/10.1007/978-0-387-30162-4_276}.

\bibitem[Ren et~al.(2019)Ren, Zuo, Hu, Zhu, and Meng]{PReNet}
Dongwei Ren, Wangmeng Zuo, Qinghua Hu, Pengfei Zhu, and Deyu Meng.
\newblock Progressive image deraining networks: A better and simpler baseline.
\newblock pages 3932--3941, 06 2019.
\newblock \doi{10.1109/CVPR.2019.00406}.

\bibitem[Roy et~al.(2023)Roy, Majumder, Halder, and Biswas]{I20}
Moumita Roy, Sukanta Majumder, Anindya Halder, and Utpal Biswas.
\newblock Ecg-net: A deep lstm autoencoder for detecting anomalous ecg.
\newblock \emph{Engineering Applications of Artificial Intelligence}, 124:\penalty0 106484, 2023.
\newblock ISSN 0952-1976.
\newblock \doi{https://doi.org/10.1016/j.engappai.2023.106484}.
\newblock URL \url{https://www.sciencedirect.com/science/article/pii/S0952197623006681}.

\bibitem[Ruff et~al.(2020)Ruff, Vandermeulen, Görnitz, Binder, Müller, Müller, and Kloft]{DeepSAD}
Lukas Ruff, Robert~A. Vandermeulen, Nico Görnitz, Alexander Binder, Emmanuel Müller, Klaus-Robert Müller, and Marius Kloft.
\newblock Deep semi-supervised anomaly detection.
\newblock In \emph{International Conference on Learning Representations}, 2020.
\newblock URL \url{https://openreview.net/forum?id=HkgH0TEYwH}.

\bibitem[Saha et~al.(2023)Saha, Sarkar, Dhavala, Sarkar, and Mota]{I49}
Snehanshu Saha, Jyotirmoy Sarkar, Soma Dhavala, Santonu Sarkar, and Preyank Mota.
\newblock Quantile {LSTM}: {A} {Robust} {LSTM} for {Anomaly} {Detection} {In} {Time} {Series} {Data}, February 2023.
\newblock URL \url{http://arxiv.org/abs/2302.08712}.
\newblock arXiv:2302.08712 [cs].

\bibitem[Sarkar et~al.(2021{\natexlab{a}})Sarkar, Bhatia, Saha, Safonova, and Sarkar]{ECBLOF}
Jyotirmoy Sarkar, Kartik Bhatia, Snehanshu Saha, Margarita Safonova, and Santonu Sarkar.
\newblock Postulating exoplanetary habitability via a novel anomaly detection method.
\newblock \emph{Monthly Notices of the Royal Astronomical Society}, 510\penalty0 (4):\penalty0 6022--6032, 12 2021{\natexlab{a}}.
\newblock ISSN 0035-8711.
\newblock \doi{10.1093/mnras/stab3556}.
\newblock URL \url{https://doi.org/10.1093/mnras/stab3556}.

\bibitem[Sarkar et~al.(2021{\natexlab{b}})Sarkar, Sarkar, Saha, and Das]{dBTAI}
Jyotirmoy Sarkar, Santonu Sarkar, Snehanshu Saha, and Swagatam Das.
\newblock d-btai: The dynamic-binary tree based anomaly identification algorithm for industrial systems.
\newblock In Hamido Fujita, Ali Selamat, Jerry Chun-Wei Lin, and Moonis Ali, editors, \emph{Advances and Trends in Artificial Intelligence. From Theory to Practice}, pages 519--532, Cham, 2021{\natexlab{b}}. Springer International Publishing.
\newblock ISBN 978-3-030-79463-7.

\bibitem[Sarkar et~al.(2023)Sarkar, Saha, and Sarkar]{MGBTAI}
Jyotirmoy Sarkar, Snehanshu Saha, and Santonu Sarkar.
\newblock Efficient anomaly identification in temporal and non-temporal industrial data using tree based approaches.
\newblock \emph{Applied Intelligence}, 53\penalty0 (8):\penalty0 8562--8595, April 2023.
\newblock ISSN 0924-669X, 1573-7497.
\newblock \doi{10.1007/s10489-022-03940-3}.
\newblock URL \url{https://link.springer.com/10.1007/s10489-022-03940-3}.

\bibitem[Schlegl et~al.(2017)Schlegl, Seeböck, Waldstein, Schmidt-Erfurth, and Langs]{I27}
Thomas Schlegl, Philipp Seeböck, Sebastian Waldstein, Ursula Schmidt-Erfurth, and Georg Langs.
\newblock Unsupervised anomaly detection with generative adversarial networks to guide marker discovery.
\newblock pages 146--157, 03 2017.
\newblock ISBN 978-3-319-59049-3.
\newblock \doi{10.1007/978-3-319-59050-9_12}.

\bibitem[Schmidl et~al.(2022{\natexlab{a}})Schmidl, Wenig, and Papenbrock]{I38}
Sebastian Schmidl, Phillip Wenig, and Thorsten Papenbrock.
\newblock Anomaly detection in time series: a comprehensive evaluation.
\newblock \emph{Proceedings of the VLDB Endowment}, 15:\penalty0 1779--1797, 05 2022{\natexlab{a}}.
\newblock \doi{10.14778/3538598.3538602}.

\bibitem[Schmidl et~al.(2022{\natexlab{b}})Schmidl, Wenig, and Papenbrock]{I74}
Sebastian Schmidl, Phillip Wenig, and Thorsten Papenbrock.
\newblock Anomaly detection in time series: a comprehensive evaluation.
\newblock \emph{Proc. VLDB Endow.}, 15\penalty0 (9):\penalty0 1779–1797, may 2022{\natexlab{b}}.
\newblock ISSN 2150-8097.
\newblock \doi{10.14778/3538598.3538602}.
\newblock URL \url{https://doi.org/10.14778/3538598.3538602}.

\bibitem[Shen et~al.(2020)Shen, Li, and Kwok]{I37}
Lifeng Shen, Zhuocong Li, and James Kwok.
\newblock Timeseries anomaly detection using temporal hierarchical one-class network.
\newblock In H.~Larochelle, M.~Ranzato, R.~Hadsell, M.F. Balcan, and H.~Lin, editors, \emph{Advances in Neural Information Processing Systems}, volume~33, pages 13016--13026. Curran Associates, Inc., 2020.
\newblock URL \url{https://proceedings.neurips.cc/paper_files/paper/2020/file/97e401a02082021fd24957f852e0e475-Paper.pdf}.

\bibitem[Shenkar and Wolf(2022)]{ICL}
Tom Shenkar and Lior Wolf.
\newblock Anomaly detection for tabular data with internal contrastive learning.
\newblock In \emph{International Conference on Learning Representations}, 2022.
\newblock URL \url{https://openreview.net/forum?id=_hszZbt46bT}.

\bibitem[Srivastava et~al.(2014)Srivastava, Hinton, Krizhevsky, Sutskever, and Salakhutdinov]{Dropout}
Nitish Srivastava, Geoffrey Hinton, Alex Krizhevsky, Ilya Sutskever, and Ruslan Salakhutdinov.
\newblock Dropout: A simple way to prevent neural networks from overfitting.
\newblock \emph{Journal of Machine Learning Research}, 15\penalty0 (56):\penalty0 1929--1958, 2014.
\newblock URL \url{http://jmlr.org/papers/v15/srivastava14a.html}.

\bibitem[Sultani et~al.(2018)Sultani, Chen, and Shah]{I31}
Waqas Sultani, Chen Chen, and Mubarak Shah.
\newblock Real-world anomaly detection in surveillance videos.
\newblock In \emph{2018 IEEE/CVF Conference on Computer Vision and Pattern Recognition}, pages 6479--6488, 2018.
\newblock \doi{10.1109/CVPR.2018.00678}.

\bibitem[Tambuwal and Neagu(2021)]{I77}
Ahmad Tambuwal and Daniel Neagu.
\newblock Deep quantile regression for unsupervised anomaly detection in time-series.
\newblock \emph{SN Computer Science}, 2, 11 2021.
\newblock \doi{10.1007/s42979-021-00866-4}.

\bibitem[Taormina et~al.(2018)Taormina, Galelli, Tippenhauer, Salomons, Ostfeld, Eliades, Aghashahi, Sundararajan, Pourahmadi, Banks, Brentan, Herrera, Rasekh, Campbell, Montalvo, Lima, Izquierdo, Haddad, Gatsis, Taha, Somasundaram, Ayala-Cabrera, Chandy, Campbell, Biswas, Lo, Manzi, Luvizotto, Barker, Giacomoni, Pasha, Shafiee, Abokifa, Housh, Kc, and Ohar]{taormina18battle}
Riccardo Taormina, Stefano Galelli, Nils~Ole Tippenhauer, Elad Salomons, Avi Ostfeld, Demetrios~G. Eliades, Mohsen Aghashahi, Raanju Sundararajan, Mohsen Pourahmadi, M.~Katherine Banks, B.~M. Brentan, M.~Herrera, Amin Rasekh, Enrique Campbell, I.~Montalvo, G.~Lima, J.~Izquierdo, Kelsey Haddad, Nikolaos Gatsis, Ahmad Taha, Saravanakumar~Lakshmanan Somasundaram, D.~Ayala-Cabrera, Sarin~E. Chandy, Bruce Campbell, Pratim Biswas, Cynthia~S. Lo, D.~Manzi, E.~Luvizotto, Jr, Zachary~A. Barker, Marcio Giacomoni, M.~Fayzul~K. Pasha, M.~Ehsan Shafiee, Ahmed~A. Abokifa, Mashor Housh, Bijay Kc, and Ziv Ohar.
\newblock The battle of the attack detection algorithms: Disclosing cyber attacks on water distribution networks.
\newblock \emph{Journal of Water Resources Planning and Management}, 2018.

\bibitem[Vapnik and Chervonenkis(1971)]{Uniconv}
V.~N. Vapnik and A.~Ya. Chervonenkis.
\newblock On the uniform convergence of relative frequencies of events to their probabilities.
\newblock \emph{Theory of Probability \& Its Applications}, 16\penalty0 (2):\penalty0 264--280, 1971.
\newblock \doi{10.1137/1116025}.

\bibitem[Wang et~al.(2019)Wang, Jiang, Xie, Zhao, Yan, and Yang]{SWaT}
Yinping Wang, Rengui Jiang, Jiancang Xie, Yong Zhao, Dongfei Yan, and Siyu Yang.
\newblock Soil and water assessment tool (swat) model: A systemic review.
\newblock \emph{Journal of Coastal Research}, 93:\penalty0 22, 09 2019.
\newblock \doi{10.2112/SI93-004.1}.

\bibitem[Wenig et~al.(2022)Wenig, Schmidl, and Papenbrock]{I18}
Phillip Wenig, Sebastian Schmidl, and Thorsten Papenbrock.
\newblock Timeeval: a benchmarking toolkit for time series anomaly detection algorithms.
\newblock \emph{Proc. VLDB Endow.}, 15\penalty0 (12):\penalty0 3678–3681, aug 2022.
\newblock ISSN 2150-8097.
\newblock \doi{10.14778/3554821.3554873}.
\newblock URL \url{https://doi.org/10.14778/3554821.3554873}.

\bibitem[Wolpert(1996)]{Wolpert}
David~H. Wolpert.
\newblock The lack of a priori distinctions between learning algorithms.
\newblock \emph{Neural Computation}, 8\penalty0 (7):\penalty0 1341--1390, 1996.
\newblock \doi{10.1162/neco.1996.8.7.1341}.

\bibitem[Xu et~al.(2023)Xu, Wang, Wei, Jian, Li, and Liu]{SLAD}
Hongzuo Xu, Yijie Wang, Juhui Wei, Songlei Jian, Yizhou Li, and Ning Liu.
\newblock Fascinating supervisory signals and where to find them: Deep anomaly detection with scale learning.
\newblock In Andreas Krause, Emma Brunskill, Kyunghyun Cho, Barbara Engelhardt, Sivan Sabato, and Jonathan Scarlett, editors, \emph{Proceedings of the 40th International Conference on Machine Learning}, volume 202 of \emph{Proceedings of Machine Learning Research}, pages 38655--38673. PMLR, 23--29 Jul 2023.
\newblock URL \url{https://proceedings.mlr.press/v202/xu23p.html}.

\bibitem[Yoo et~al.(2019)Yoo, Kim, and Kim]{I40}
Yong-Ho Yoo, Ue-Hwan Kim, and Jong-Hwan Kim.
\newblock Recurrent reconstructive network for sequential anomaly detection.
\newblock \emph{IEEE Transactions on Cybernetics}, PP:\penalty0 1--12, 08 2019.
\newblock \doi{10.1109/TCYB.2019.2933548}.

\bibitem[Zong et~al.(2018)Zong, Song, Min, Cheng, Lumezanu, Cho, and Chen]{DAGMM}
Bo~Zong, Qi~Song, Martin~Renqiang Min, Wei Cheng, Cristian Lumezanu, Daeki Cho, and Haifeng Chen.
\newblock Deep autoencoding gaussian mixture model for unsupervised anomaly detection.
\newblock In \emph{International Conference on Learning Representations}, 2018.
\newblock URL \url{https://openreview.net/forum?id=BJJLHbb0-}.

\end{thebibliography}

\newpage

\appendix

\section{Literature Review}
\label{sec:a5}
Throughout the years, much research has been conducted in anomaly detection with a multitude of explored methods such as in \cite{I31, I51, I37, I38, I22, I39, I40, I41}. Since then, this interest has increased significantly as various domains such as cybersecurity \cite{I17, I18}, fraud detection, and healthcare \cite{I20, I21} became more relevant. The work on learning from imbalanced datasets was proposed in AAAI 2000 workshop and highlighted research on two major problems, types of imbalance that hinder the performance of standard classifiers and the suitable approaches for the same. \cite{I68} also showed that the class imbalance problem affects not only standard classifiers like decision trees but also Neural Networks and SVMs. The work by \cite{I66,I67} gave a further boost to research in imbalanced data classification. Extensive research was done on unsupervised learning methods to address issues such as relying on static thresholds, in turn struggling to adapt to dynamic data, resulting in high false positives and missed anomalies. However, the work by \cite{I70} observed that unsupervised anomaly detection can be computationally intensive, especially in high-dimensional datasets. Jacob and Tatbul \cite{I71} delved into explainable anomaly detection in time series using real-world data, yet deep learning-based time-series anomaly detection models were not thoroughly explored well enough. With significant growth in applying various ML algorithms to detect anomalies, there has been an avalanche of anomaly benchmarking data \cite{I43}, \cite{I18}, \cite{I72}, as well as empirical studies of the performances of existing algorithms \cite{I73}, \cite{I74} on different benchmark data. Due to the importance of the problem, there have also been efforts to produce benchmarks such as AD-Bench~\cite{I43} and ESA-ADB~\cite{I44}. Researchers have critically examined the suitability of evaluation metrics for machine learning methods in anomaly detection. Kim et al. \cite{I76} exposed the limitations of the F1-score with point adjustment, both theoretically and experimentally.

To conclude, recent benchmarking studies, concentrated on deep-learning-based anomaly detection techniques mostly, have not examined the performance across varying types of anomalies, such as singleton/point, small, and significantly large numbers, nor across different data types, including univariate, multivariate, temporal, and non-temporal data. Additionally, there has been a lack of exploration into how anomalies should be identified when their frequency is high. These observations prompt several critical questions. Is the current SoTA algorithm the most effective? Are we reaching the peak in anomaly detection using Deep Learning approaches? Are unsupervised learning algorithms truly better than supervised learning algorithms?

\section{Baselines and Additional Tables}
\label{appendix:baseline}

\begin{table*}[h!]
  \centering
  \scriptsize
  \setlength{\tabcolsep}{1mm}
  \renewcommand{\arraystretch}{1.5}
  \caption{Descriptions and hyperparameter settings of SOTA algorithms benchmarked in this study}
  \label{tab:algo_list}
  \begin{tabular}{|p{0.12\linewidth}|p{0.85\linewidth}|}
    \hline
    \textbf{Algorithm} & \textbf{Anomaly Detection Approach} \\
    \hline
    LOF & Uses data point densities to identify an anomaly by measuring how isolated a point is relative to its nearest neighbors in the feature space. Implemented using the Python Outlier Detection (PyOD) library with default parameters. Trained on 70\% of data and tested on the entire dataset. \\
    \hline
    Iforest & Ensemble-based algorithm that isolates anomalies by constructing decision trees. Reported to perform well in high-dimensional data. The algorithm efficiently separates outliers by requiring fewer splits in the decision tree compared to normal data points. Implemented using scikit-learn with default parameters. Trained on 70\% of data and tested on the entire dataset. \\
    \hline
    OCSVM & Constructs a hyperplane in a high-dimensional space to separate normal data from anomalies. Implemented using scikit-learn with default parameters. Trained on 70\% normal data, or whatever normal data was available. \\
    \hline
    AutoEncoder & Anomalies are identified based on the reconstruction errors generated during the encoding-decoding process. \textit{Requires training on normal data.} Implemented using Keras. Lower threshold set at the 0.75th percentile, and upper threshold at the 99.25th percentile of the Mean Squared Error (MSE) values. Trained on 70\% normal data, or whatever normal data was available. Anomalies are detected by comparing the reconstruction error with the predefined thresholds. \\
    \hline
    DAGMM & Combines .{autoencoder} and Gaussian mixture models to model the data distribution and identify anomalies. It has a compression network to process low-dimensional representations, and the Gaussian mixture model helps capture data complexity. \textit{This algorithm requires at least 2 anomalies to be effective.} It is trained on 70\% normal data, or whatever normal data was available. Anomalies are detected by calculating anomaly scores, with thresholds set at two standard deviations above and below the mean anomaly score. \\
    \hline
    LSTM & Trained on a normal time-series data sequence. Acts as a predictor, and the prediction error, drawn from a multivariate Gaussian distribution, detects the likelihood of anomalous behavior. Implemented using Keras. Lower threshold set at the 5th percentile, and upper threshold at the 95th percentile of the Mean Squared Error (MSE) values. Trained on 70\% normal data, or whatever normal data was available. Anomalies are detected when the prediction error lies outside the defined thresholds. \\
    \hline
    qLSTM & Augments LSTM with quantile thresholds to define the range of normal behavior within the data. Implementation follows the methodology described in the authors' paper, which applies quantile thresholds to LSTM predictions. Anomalies are detected when the prediction error falls outside the defined quantile range. \\
    \hline
    QREG & A multilayered LSTM-based RNN forecasts quantiles of the target distribution to detect anomalies. The core mathematical principle involves modeling the target variable's distribution using multiple quantile functions. Lower threshold set at the 0.9th percentile, and upper threshold at the 99.1st percentile of the predicted values. Anomalies are detected when the predicted value lies outside these quantile thresholds. Trained on 70\% data and tested on the entire dataset. \\
    \hline
    Elliptic Envelope & Fits an ellipse around the central multivariate data points, isolating outliers. It needs a contamination parameter of 0.1 by default, with a support fraction of 0.75, and uses Mahalanobis distance for multivariate outlier detection. Implemented using default parameters from the sklearn package. Trained on 70\% of data and tested on the entire dataset. Anomalies are detected when data points fall outside the fitted ellipse. \\
    \hline
    DevNet & A Deep Learning-based model designed specifically for anomaly detection tasks. Implemented using the Deep Learning-based Outlier Detection (DeepOD) library. Anomalies are detected based on the deviation score, with a threshold defined according to the model's performance and expected anomaly rate. Trained on 70\% data and tested on the entire dataset; \textit{it requires atleast 2 anomalies in its training set to function, and for optimal performance, it is recommended to include at least 2\% anomalies in the training data.} \\
    \hline
    GAN & Creates data distributions and detects anomalies by identifying data points that deviate from the generated distribution. It consists of generator and discriminator networks trained adversarially. Implemented using Keras. All data points whose discriminator score lies in the lowest 10th percentile are considered anomalies. Trained on 70\% normal data. \\
    \hline
    GNN & GDN, which is based on graph neural networks, learns a graph of relationships between parameters and detects deviations from the patterns. Implementation follows the methodology described in the authors' paper. \\
    \hline
    MGBTAI & An unsupervised approach that leverages a multi-generational binary tree structure to identify anomalies in data. Minimum clustering threshold set to 20\% of the dataset size and leaf level threshold set to 4. Used k-means clustering function. No training data required.\\
    \hline
    dBTAI & Like MGBTAI, it does not rely on training data. It adapts dynamically as data environments change.The small cluster threshold is set to 2\% of the data size. The leaf level threshold is set to 3. The minimum cluster threshold is set to 10\% of the data size and the number of clusters are 2 (for KMeans clustering at each split). The split threshold is 0.9 (used in the binary tree function). The anomaly threshold is determined dynamically using the knee/elbow method on the cumulative sum of sorted anomaly scores. The kernel density uses a gaussian kernel with default bandwidth and uses imbalance ratio to weight the density ratios. Used k-means clustering function. No training data required.\\
    \hline
    FTTransformer & It is a sample adaptation of the original transformer architecture for tabular data. The model transforms all features (categorical and numerical) to embeddings and applies a stack of Transformer layers to the embeddings. However, as stated in the original paper's \cite{FTT} limitations: FTTransformer requires more resources (both hardware and time) for training than simple models such as ResNet and may not be easily scaled to datasets when the number of features is “too large”. \\
    \hline
    DeepSAD & It is a generalization of the unsupervised Deep SVDD method to the semi-supervised anomaly detection setting and thus needs labeled data for training. It is also considered as an information-theoretic framework for deep anomaly detection.\\
    \hline
    PReNet & It has a basic ResNet with input and output convolution layers, several residual blocks (ResBlocks) and a recurrent layer implemented using a LSTM. It is particularly created for the task of image deraining as mentioned in \cite{PReNet}.\\
    \hline
  \end{tabular}
\end{table*}

\begin{table*}[t]
\centering
\begin{minipage}[t]{0.5\textwidth}
    \centering
    \setlength{\tabcolsep}{2mm}
    \tiny
    \begin{tabular}{|l|l|l|l|l|l|}
    \hline
    \textbf{Dataset} & \textbf{Size} & \textbf{Dimension} & \textbf{\# Anomalies} & \textbf{\% Anomalies} & \textbf{Domain} \\ \hline
    ALOI & 49534 & 27 & 1508 & 3.04 & Image \\ \hline
    annthyroid & 7200 & 6 & 534 & 7.42 & Healthcare \\ \hline
    backdoor & 95329 & 196 & 2329 & 2.44 & Network \\ \hline
    breastw & 683 & 9 & 239 & 34.99 & Healthcare \\ \hline
    campaign & 41188 & 62 & 4640 & 11.27 & Finance \\ \hline
    cardio & 1831 & 21 & 176 & 9.61 & Healthcare \\ \hline
    Cardiotocography & 2114 & 21 & 466 & 22.04 & Healthcare \\ \hline
    celeba & 202599 & 39 & 4547 & 2.24 & Image \\ \hline
    cover & 286048 & 10 & 2747 & 0.96 & Botany \\ \hline
    donors & 619326 & 10 & 36710 & 5.93 & Sociology \\ \hline
    fault & 1941 & 27 & 673 & 34.67 & Physical \\ \hline
    fraud & 284807 & 29 & 492 & 0.17 & Finance \\ \hline
    glass & 214 & 7 & 9 & 4.21 & Forensic \\ \hline
    Hepatitis & 80 & 19 & 13 & 16.25 & Healthcare \\ \hline
    http & 567498 & 3 & 2211 & 0.39 & Web \\ \hline
    InternetAds & 1966 & 1555 & 368 & 18.72 & Image \\ \hline
    Ionosphere & 351 & 33 & 126 & 35.9 & Mineralogy \\ \hline
    landsat & 6435 & 36 & 1333 & 20.71 & Astronautics \\ \hline
    letter & 1600 & 32 & 100 & 6.25 & Image \\ \hline
    Lymphography & 148 & 18 & 6 & 4.05 & Healthcare \\ \hline
    magic.gamma & 19020 & 6 & 260 & 1.37 & Physical \\ \hline
    mammography & 11183 & 6 & 260 & 2.32 & Healthcare \\ \hline
    mnist & 7603 & 100 & 700 & 9.21 & Image \\ \hline
    musk & 3062 & 166 & 97 & 3.17 & Chemistry \\ \hline
    optdigits & 5216 & 64 & 150 & 2.88 & Image \\ \hline
    PageBlocks & 5393 & 10 & 510 & 9.46 & Document \\ \hline
    pendigits & 6870 & 16 & 156 & 2.27 & Image \\ \hline
    Pima & 768 & 8 & 268 & 34.9 & Healthcare \\ \hline
    satellite & 6435 & 36 & 2036 & 31.64 & Astronautics \\ \hline
    satimage-2 & 5803 & 36 & 71 & 1.22 & Astronautics \\ \hline
    shuttle & 49097 & 9 & 3511 & 7.15 & Astronautics \\ \hline
    skin & 245057 & 3 & 50859 & 20.75 & Image \\ \hline
    smtp & 95156 & 3 & 30 & 0.03 & Web \\ \hline
    SpamBase & 4207 & 57 & 1679 & 39.91 & Document \\ \hline
    speech & 3686 & 400 & 61 & 1.65 & Linguistics \\ \hline
    Stamps & 340 & 9 & 31 & 9.12 & Document \\ \hline
    thyroid & 3772 & 6 & 93 & 2.47 & Healthcare \\ \hline
    vertebral & 240 & 6 & 30 & 12.5 & Biology \\ \hline
    vowels & 1456 & 12 & 50 & 3.43 & Linguistics \\ \hline
    Waveform & 3443 & 21 & 100 & 2.9 & Physics \\ \hline
    WBC & 223 & 9 & 10 & 4.48 & Healthcare \\ \hline
    WDBC & 367 & 30 & 10 & 2.72 & Healthcare \\ \hline
    Wilt & 4819 & 5 & 257 & 5.33 & Botany \\ \hline
    wine & 129 & 13 & 10 & 7.75 & Chemistry \\ \hline
    WPBC & 198 & 33 & 47 & 23.74 & Healthcare \\ \hline
    yeast & 1484 & 8 & 507 & 34.16 & Biology \\ \hline
    CIFAR10 & 5263 & 512 & 263 & 5 & Image \\ \hline
    FashionMNIST & 6315 & 512 & 315 & 5 & Image \\ \hline
    MNIST-C & 10000 & 512 & 500 & 5 & Image \\ \hline
    MVTec-AD & 292 & 512 & 63 & 21.5 & Image \\ \hline
    SVHN & 5208 & 512 & 260 & 5 & Image \\ \hline
    Agnews & 10000 & 768 & 500 & 5 & NLP \\ \hline
    Amazon & 10000 & 768 & 500 & 5 & NLP \\ \hline
    Imdb & 10000 & 768 & 500 & 5 & NLP \\ \hline
    Yelp & 10000 & 768 & 500 & 5 & NLP \\ \hline
    20newsgroups & 3090 & 768 & 155 & 5 & NLP \\ \hline
    BATADAL 04 & 4177 & 43 & 219 & 5.24 & Industrial \\ \hline
    SWaT 1 & 50400 & 51 & 4466 & 8.86 & Industrial \\ \hline
    SWaT 2 & 86400 & 51 & 4216 & 4.88 & Industrial \\ \hline
    SWaT 3 & 86400 & 51 & 3075 & 3.56 & Industrial \\ \hline
    SWaT 4 & 86319 & 51 & 37559 & 43.51 & Industrial \\ \hline
    SWaT 5 & 86400 & 51 & 2167 & 2.51 & Industrial \\ \hline
    SWaT 6 & 54000 & 51 & 3138 & 5.81 & Industrial \\ \hline
    ecoli & 336 & 7 & 9 & 2.68 & Healthcare \\ \hline
    cmc & 1473 & 9 & 17 & 1.15 & Healthcare \\ \hline
    lympho h & 148 & 18 & 6 & 4.05 & Healthcare \\ \hline
    wbc h & 378 & 30 & 21 & 5.56 & Healthcare \\ \hline
    \end{tabular}
    \caption{Multivariate Datasets Characterisation}
    \label{table:2}
\end{minipage}
\end{table*}

\begin{table*}[t]
\begin{minipage}[t]{0.5\textwidth}
    \centering
    \setlength{\tabcolsep}{2mm}
    \tiny
    \begin{tabular}{|l|c|l|c|}
    \hline
    \textbf{Dataset}     & \textbf{Optimal Threshold} & \textbf{Dataset}    & \textbf{Optimal Threshold} \\ \hline
    ALOI                & 0.007       & yahoo1          & 0.130 \\ \hline
    annthyroid          & 0.318       & yahoo2          & 0.594 \\ \hline
    backdoor            & 0.490       & yahoo3          & 0.415 \\ \hline
    breastw             & 0.751       & yahoo5          & 0.233 \\ \hline
    campaign            & 0.009       & yahoo6          & 0.127 \\ \hline
    cardio              & 0.445       & yahoo7          & 0.278 \\ \hline
    Cardiotocography    & 0.438       & yahoo8          & 0.243 \\ \hline
    celeba              & 0.006       & yahoo9          & 0.251 \\ \hline
    cover               & 0.0002      & Speed\_6005     & 0.483 \\ \hline
    donors              & 0.221       & Speed\_7578     & 0.463 \\ \hline
    fault               & 0.410       & Speed\_t4013    & 0.319 \\ \hline
    fraud               & 0.197       & TravelTime\_387 & 0.266 \\ \hline
    glass               & 0.502       & TravelTime\_451 & 0.151 \\ \hline
    Hepatitis           & 0.463       & Occupancy\_6005 & 0.217 \\ \hline
    http                & 0.928       & Occupancy\_t4013& 0.363 \\ \hline
    InternetAds         & 0.684       & yahoo\_syn1     & 0.464 \\ \hline
    Ionosphere          & 0.393       & yahoo\_syn2     & 0.396 \\ \hline
    landsat             & 0.253       & yahoo\_syn3     & 0.534 \\ \hline
    letter              & 0.695       & yahoo\_syn5     & 0.449 \\ \hline
    Lymphography        & 0.474       & yahoo\_syn6     & 0.421 \\ \hline
    magic.gamma         & 0.074       & yahoo\_syn7     & 0.437 \\ \hline
    mammography         & 0.220       & yahoo\_syn8     & 0.335 \\ \hline
    mnist               & 0.319       & yahoo\_syn9     & 0.457 \\ \hline
    musk                & 0.999       & aws1            & 0.125 \\ \hline
    optdigits           & 0.022       & aws2            & 0.444 \\ \hline
    PageBlocks          & 0.398       & aws3            & 0.238 \\ \hline
    pendigits           & 0.274       & aws\_syn1       & 0.387 \\ \hline
    Pima                & 0.455       & aws\_syn2       & 0.639 \\ \hline
    satellite           & 0.114       & aws\_syn3       & 0.398 \\ \hline
    satimage-2          & 0.222       &                 &       \\ \hline
    shuttle             & 0.861       &                 &       \\ \hline
    skin                & 0.005       &                 &       \\ \hline
    smtp                & 0.001       &                 &       \\ \hline
    SpamBase            & 0.409       &                 &       \\ \hline
    speech              & 0.361       &                 &       \\ \hline
    Stamps              & 0.487       &                 &       \\ \hline
    thyroid             & 0.341       &                 &       \\ \hline
    vertebral           & 0.446       &                 &       \\ \hline
    vowels              & 0.414       &                 &       \\ \hline
    Waveform            & 0.311       &                 &       \\ \hline
    WBC                 & 0.547       &                 &       \\ \hline
    WDBC                & 0.837       &                 &       \\ \hline
    Wilt                & 0.321       &                 &       \\ \hline
    wine                & 0.436       &                 &       \\ \hline
    WPBC                & 0.417       &                 &       \\ \hline
    yeast               & 0.406       &                 &       \\ \hline
    CIFAR10             & 0.325       &                 &       \\ \hline
    FashionMNIST        & 0.409       &                 &       \\ \hline
    MNIST-C             & 0.004       &                 &       \\ \hline
    MVTec-AD            & 0.549       &                 &       \\ \hline
    SVHN                & 0.504       &                 &       \\ \hline
    Agnews              & 0.041       &                 &       \\ \hline
    Amazon              & 0.103       &                 &       \\ \hline
    Imdb                & 0.061       &                 &       \\ \hline
    Yelp                & 0.170       &                 &       \\ \hline
    20newsgroups        & 0.139       &                 &       \\ \hline
    BATADAL\_04         & 0.413       &                 &       \\ \hline
    SWaT 1              & 0.006       &                 &       \\ \hline
    SWaT 2              & 0.002       &                 &       \\ \hline
    SWaT 3              & 0.003       &                 &       \\ \hline
    SWaT 4              & 0.005       &                 &       \\ \hline
    SWaT 5              & 0.004       &                 &       \\ \hline
    SWaT 6              & 0.010       &                 &       \\ \hline
    ecoli               & 0.505       &                 &       \\ \hline
    cmc                 & 0.458       &                 &       \\ \hline
    lympho h            & 0.381       &                 &       \\ \hline
    wbc h               & 0.348       &                 &       \\ \hline
    \end{tabular}
    \caption{Optimal Thresholds for Various Datasets; Main Text(Contribution) : Automated Hyperparameter Tuning}
    \label{table:5}
\end{minipage}
\end{table*}

\end{document}